\theoremstyle{plain}
\newtheorem{theorem}{Theorem}[section]
\newtheorem*{theorem*}{Theorem}
\theoremstyle{definition}
\theoremstyle{remark}
\icmltitlerunning{Understanding Inter-Concept Relationships}
\begin{document}

\twocolumn[
\icmltitle{Understanding Inter-Concept Relationships in Concept-Based Models}

\icmlsetsymbol{equal}{*}

\begin{icmlauthorlist}
\icmlauthor{Naveen Raman}{cmu}
\icmlauthor{Mateo Espinosa Zarlenga}{cam}
\icmlauthor{Mateja Jamnik}{cam}
\end{icmlauthorlist}

\icmlaffiliation{cmu}{Carnegie Mellon University}
\icmlaffiliation{cam}{University of Cambridge}

\icmlcorrespondingauthor{Naveen Raman}{naveenr@cmu.edu}

\icmlkeywords{Machine Learning, ICML}

\vskip 0.3in
]

\newcommand{\nrcomment}[1]{}
\newcommand{\mecomment}[1]{{\color{blue} Mateo: {#1}}}
\newcommand{\kccomment}[1]{{\color{purple} Katie: {#1}}}

\printAffiliationsAndNotice{}

\begin{abstract}
Concept-based explainability methods provide insight into deep learning systems by constructing explanations using human-understandable concepts. 
While the literature on human reasoning demonstrates that we exploit relationships between concepts when solving tasks, it is unclear whether concept-based methods incorporate the rich structure of inter-concept relationships. 
We analyse the concept representations learnt by concept-based models to understand whether these models correctly capture inter-concept relationships.
First, we empirically demonstrate that state-of-the-art concept-based models produce representations that lack stability and robustness, and such methods fail to capture inter-concept relationships. 
Then, we develop a novel algorithm which leverages inter-concept relationships to improve concept intervention accuracy, demonstrating how correctly capturing inter-concept relationships can improve downstream tasks. 

\end{abstract}

\section{Introduction}
\label{sec:introduction}
Explainability methods construct explanations for predictions made by deep learning systems. 
One approach for generating such explanations is via high-level units of information referred to as ``\textit{concepts}''~\citep{koh2020concept}. 
For example, a model's classification of a fruit as an ``apple'' can be explained because the model detected the concepts of ``\textit{red colour}'' and ``\textit{round shape}''.
These models have been applied to tasks such as human-AI teaming~\citep{zarlenga2023learning}, uncertainty quantification~\citep{kim2023probabilistic}, and model debugging~\citep{bontempelli2022concept}. 

Many existing concept-based models~\citep{koh2020concept, kim2018interpretability, zarlenga2022concept} predict concepts independently, despite the prevalence of interrelated concepts in real-world situations. 
For example, birds with ``\textit{grey wings}'' tend to have ``\textit{grey tails}'', and patients who have ``\textit{lung lesions}'' tend to be on ``\textit{support devices}''.
Learning from inter-concept relationships better mimics the way humans process information~\cite{mcclelland2003parallel} and could assist with downstream tasks. 
However, leveraging these relationships can be difficult because (1)~concept labels tend to be noisy, as annotations can be imperfect~\citep{collins2023human}, and (2)~explainability methods are inherently unstable~\citep{brown2021brittle,dombrowski2019explanations}. 

\begin{figure*}[t!]
    \centering
    \includegraphics[width=0.9\textwidth]{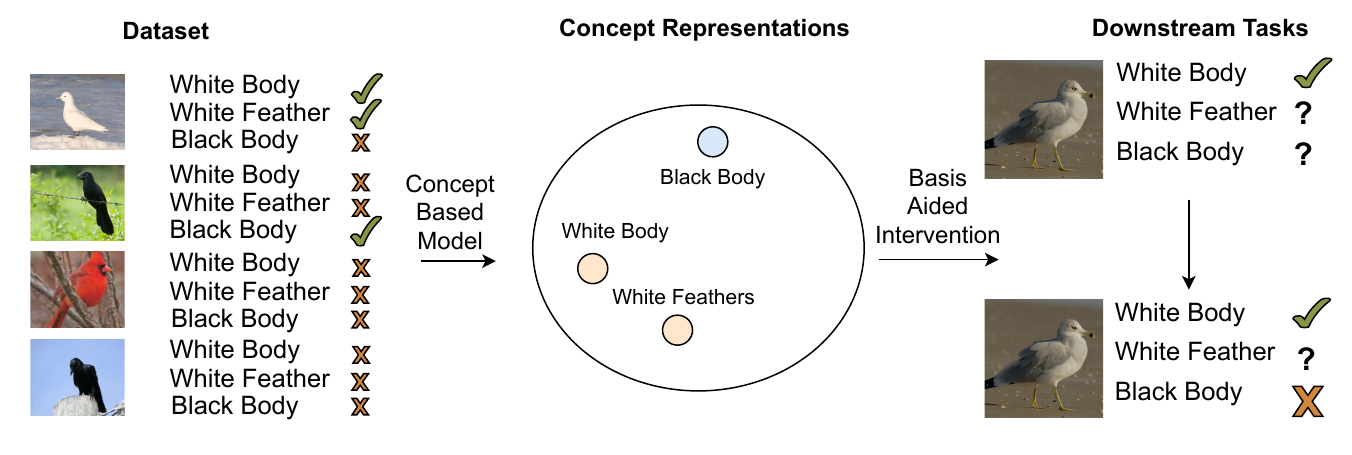}
    \caption{We analyse whether concept-based models capture inter-concept similarities by studying their learnt ``concept vector'' representations.
    We demonstrate that learning representations which properly capture inter-concept relationships can help with downstream tasks.
    For example, these relationships can assist with test-time concept interventions by imputing uncertain concepts via known concept labels (e.g., we can determine the concept ``\textit{Black body}'' from ``\textit{White body}'').}
    \label{fig:main}
    \vspace*{-2mm}
\end{figure*}

We study inter-concept relationships in concept-based models to understand how concept-based models capture inter-concept relationships, an often overlooked area in prior work. 
By analysing learnt representations, we surprisingly find that state-of-the-art concept-based models may fail to capture known inter-concept relationships.
We then construct a novel algorithm which exploits inter-concept relationships to improve the effectiveness of human-AI \textit{concept interventions} -- where experts correct some mispredicted concepts -- highlighting how leveraging inter-concept relationships can improve downstream tasks.
We illustrate our approach in Figure~\ref{fig:main} and summarise our contributions: 

\begin{enumerate}[topsep=0pt,leftmargin=11pt,itemsep=0pt]
    \item We analyse the concept representations constructed by existing concept-based models and show that, unexpectedly, these representations may fail to capture known inter-concept relationships\footnote{Our code is available here: \url{https://github.com/naveenr414/Concept-Learning}.}.

    \item We propose an algorithm that exploits inter-concept relationships to improve test-time concept interventions. 
    \item We theoretically show that leveraging inter-concept relationships can improve concept intervention performance and validate this result empirically.  
\end{enumerate}\looseness-1

\section{Related Works}
\label{sec:related}
\paragraph{Concept-based Explainability}  
Developing explainability methods is challenging due to potentially conflicting goals \citep{rudin2022interpretable,lipton2018mythos} including eliciting trust~\citep{shen2022trust}, accurately representing model reasoning~\citep{lipton2018mythos}, and efficiently generating explanations~\citep{langer2021we}. 
Concept-based explainability methods aim to cover these desiderata by developing explanations for model predictions using high-level units of information called \textit{concepts}~\citep{kim2018interpretability, ghorbani2019towards, chen2020concept, cme}.
Recent methods in this field, such as Concept Bottleneck Models (CBMs)~\citep{koh2020concept}, Concept Embedding Models (CEMs)~\citep{zarlenga2022concept}, and recently proposed variants~\cite{havasi2022addressing, post_hoc_cbms, label_free_cbms}, put forth architectures that construct concept-based explanations by first predicting the presence of concepts and then predicting a label based on these concept predictions. 
In this work, we focus on understanding the inter-concept relationships learnt by such models, as such relations are key components of human-like reasoning.

\paragraph{Inter-Concept Relationships} 
The use of inter-concept relationships for human reasoning has been studied in cognitive science as a model of cognition and understanding~\citep{chater2010bayesian,griffiths2007unifying,mao2019neuro}. 
Generally, graph-based structures are a common way of relating large amounts of information in natural language processing~\citep{alsuhaibani2019joint,mikolov2013efficient}, database management~\citep{jonyer2001graph}, and knowledge graphs~\citep{hogan2021knowledge}.  
In this work, we explore whether concept representations learnt by state-of-the-art concept-based models properly capture inter-concept relationships, an important property that, to the best of our knowledge, has not been previously studied.

\paragraph{Analysing Concept-Based Models} 
Our work fits into the wider literature that analyses the behaviour and failure modes of concept-based models. 
Prior work in this space has analysed concept-task leakage, where task information is leaked into concept predictors, thereby leading to erroneous concept predictions~\cite{mahinpei2021promises,marconato2022glancenets,havasi2022addressing}, and such an issue could potentially jeopardise the learnt inter-concept relationships.
Other work has investigated the robustness of concept predictors and shown that their predictions fail to truly reflect the presence of concepts due to concept correlations~\cite{raman2024concept}. 
Both lines of work demonstrate the fragility of concept-based models. 
We build on these works by investigating concept-based models through the lens of inter-concept relationships. 


\section{Defining Inter-Concept Relationships}
\label{sec:hierarchies}

\paragraph{Introducing Concepts} 
\looseness-1
Concept-based learning is a supervised learning setup where we are given a set of training samples $X = \{\mathbf{x}^{(i)} \in \mathbb{R}^{m}\}_{i=1}^n$ and corresponding labels $Y = \{y^{(i)} \in \{1, \cdots, L\} \}_{i=1}^n$ annotated with vectors of high-level concepts $C = \{\mathbf{c}^{(i)} \in \{0,1\}^{k}\}_{i=1}^n$.
In this setup, the $i$-th data point $\mathbf{x}^{(i)}$ has an associated set of $k$ binary concepts (either inactive or active) where the activation of the $j$-th concept is denoted by $\mathbf{c}^{(i)}_j$.
For example, when learning to predict a bird's species $y^{(i)}$ from its image $\mathbf{x}^{(i)}$, $\mathbf{c}^{(i)}_{1}$ could represent the concept ``\textit{white tail colour}''. 

\looseness-1
Concept labels can be used to develop deep neural network architectures which make both task and concept predictions. 
One such architecture is a Concept Bottleneck Model (CBM)~\citep{koh2020concept}, which uses a \textit{concept predictor},~$g$, to predict concepts $\hat{\mathbf{c}}$ from an input $\mathbf{x}$, and a \textit{label predictor},~$f$, which predicts labels~$\hat{y}$ from concepts~$\hat{\mathbf{c}}$.
Their two-stage architecture allows experts to \textit{intervene on concept predictions} at test time by correcting a subset of mispredicted concepts. 
This procedure enables human-AI teams to improve a CBM's test accuracy~\citep{koh2020concept,chauhan2022interactive}. 
More recent extensions of CBMs, such as concept embedding models (CEMs) \citep{zarlenga2022concept}, generalise a CBM's bottleneck by using high-dimensional embeddings. 

\paragraph{Introducing Concept Bases} 
In this paper, we study inter-concept relationships by analysing the concept representations learnt by concept-based models, a previously unstudied aspect.
Understanding similarities between concept representations can uncover whether concept-based models pick up on inter-concept relationships, and can also help with downstream applications (see Figure~\ref{fig:main}).  
For example, the representation for the concept ``\textit{yellow head}'' should be closer to that of ``\textit{yellow neck}'' than for ``\textit{green tail}''.
Such an analysis crucially serves as a sanity check that a model properly captures known patterns.

Concept-based models implicitly learn a set of concept vectors, which we call a \textit{concept basis} $B = \{\mathbf{v}^{(j)} \in \mathbb{R}^d\}_{j =1}^k$, where each vector $\mathbf{v}^{(j)}$ is a $d$-dimensional representation of concept~$j$. 
We note that vectors in the concept basis are not necessarily independent, and instead view the concept basis as a collection of vectors which defines some set of concepts. 
These bases can take many forms, including Concept Activation Vectors (CAVs)~\citep{kim2018interpretability} and concept embeddings learnt by CEMs \citep{zarlenga2022concept}. 
Inter-concept relationships can provide a structure which practitioners can use to better understand the reasoning behind a model's predictions~\citep{bansal2021does}.
Additionally, as we will show later, understanding inter-concept relationships such as mutual exclusivity can be exploited for error correction during inference. 


\section{Designing Metrics to Analyse Inter-Concept Relationships}
\label{sec:desiderata}
We propose a set of desiderata for well-calibrated concept representations to help evaluate whether concept-based models capture inter-concept relationships. 
These desiderata enable us to contrast concept bases learnt by different concept-based models and explore whether these representations properly capture inter-concept relationships. 
Taking inspiration from previous desiderata for explainable AI methods~\citep{hedstrom2022quantus}, we argue that well-calibrated concept representations should be: (1)~\textbf{stable} -- they should capture similar inter-concept relationships across random seeds, (2)~\textbf{robust} -- the inter-concept relationships captured should not vary based on small input perturbations, (3)~\textbf{responsive} -- the inter-concept relationships should vary when the input is significantly altered, and (4)~\textbf{faithful} -- the inter-concept relationships should accurately reflect any known inter-concept relationships in a dataset. 



\vspace*{-2mm}
\paragraph{Distances Between Concept Bases} 
Measuring the above desiderata requires a way to measure the similarity of different concept bases, so we can capture variations across factors such as the dataset and random seed.
We define a distance metric between concept bases based on the similarity of inter-concept relationships captured by each basis. 
Let $\delta_{b}(B,B^\prime)$ be the distance between concept bases $B$ and $B^\prime$. 
To compute $\delta_{b}$, we calculate the overlap between the $t$ most similar concepts to $j$ in $B$ and those in $B^\prime$. 

Formally, let $\delta_{v}$ be a distance metric between concept vectors, like the $\ell_2$-norm. 
Then, for a concept vector $\mathbf{v}^{(j)} \in B$, we denote the $t$-nearest concept vectors as $N_B(\mathbf{v}^{(j)})$, where $t$ is a hyperparameter. 
We then compute overlap between $N_B(\mathbf{v}^{(j)})$ and $N_{B^\prime}(\mathbf{v}^{(j^\prime)})$, averaged across concepts:
\begin{align*}
    \scriptstyle
    \delta_b(\{\mathbf{v}^{(1)} \cdots \mathbf{v}^{(k)}\}, \{\mathbf{v}^{\prime (1)} \cdots \mathbf{v}^{\prime (k)}\}) := 1 - \frac{1}{k} \sum_{i=1}^{k} \frac{|N(\mathbf{v}^{(i)}) \cap N(\mathbf{v}^{\prime (i)})|}{t}.
\end{align*}

\paragraph{Metrics for Concept Vectors}
We quantify our desiderata as follows (details in Appendix~\ref{sec:metric_details}):
\begin{enumerate}[topsep=0pt,leftmargin=11pt,itemsep=0pt]
    \item \textbf{Stability} can be measured as $1-\mathbb{E}[\delta_b(B, B^{\prime})]$ where $B$ and $B^{\prime}$ are sampled independently from the same concept-based model with different training seeds (higher values are more stable). In practice, we estimate this through Monte Carlo sampling. 
    \item \textbf{Robustness} is measured as $1-\delta_b(B, B^{\prime})$, where $B$ is a basis learnt from an unperturbed baseline dataset, while $B^{\prime}$ is a basis learnt from a slightly perturbed dataset.  
    \item \textbf{Responsiveness} can be computed by constructing $B$ from a baseline dataset and measuring $\delta_b(B, B^{\prime})$, where $B^{\prime}$ uses concept bases extracted from a corrupted dataset. $B^{\prime}$ in robustness involves small amounts of noise, while $B^{\prime}$ in responsiveness involves large amounts of noise. 
    \item \textbf{Faithfulness} measures whether similarities between concepts in a dataset mirror similarities between concepts in a concept basis. 
    In other words, concepts which have a similar impact on the task label should have similar representations. 
    For example, the presence of either ``\textit{white body}'' or ``\textit{white head}'' increases the probability that a bird is predicted to be a ``pigeon.'' 
    
    Formally, this can be computed by constructing a set of vectors $\{\mathbf{s}^{(1)}, \cdots, \mathbf{s}^{(k)}\}$ where the $l$-th entry of $\mathbf{s}^{(j)} := [\mathbf{s}^{(j)}_1, \cdots, \mathbf{s}^{(j)}_L]^T$ indicates the importance of concept $j$ on task label $l$. 
    We then evaluate the faithfulness of a concept basis $B$ by considering the set $B_s = \{\mathbf{s}^{(1)}, \cdots, \mathbf{s}^{(k)}\}$ as a concept basis and computing faithfulness as $1 - \delta_b(B_s, B)$. 
    To compute $\mathbf{s}^{(j)}_l$, we take inspiration from Shapley values~\citep{shapley1953value}, and compare the difference in predictions for label $l$  on data points with and without concept $j$.
    Given a concept predictor $g$ and a label predictor $f$, we compute: 
    \begin{align*} 
        s^{(j)}_l := \sum_{i \in \mathcal{A}_j} f(g(\mathbf{x}^{(i)}))_j - \sum_{i \in \{1, \cdots, n\} \backslash \mathcal{A}_j} f(g(\mathbf{x}^{(i)}))_j.
    \end{align*} 
\end{enumerate}

We evaluate and justify these metrics as a method to evaluate concept-based methods through two studies. 
In Appendix~\ref{sec:synthetic}, we construct a synthetic scenario where we demonstrate that a well-designed concept-based model achieves higher scores for stability and robustness when compared to poorly designed concept-based models. 
In Appendix~\ref{sec:ois}, we justify the creation of these metrics through the lens of concept leakage~\cite{mahinpei2021promises,zarlenga2023towards}, where we demonstrate that better scores on each of these metrics correlate with concept-based models which exhibit lower concept leakage.  
\section{Do Concept-based Models Capture Known Inter-Concept Relationships?}
\label{sec:experiments}
\subsection{Discovering Concept Bases} 
We analyse the representations underlying various methods for concept-based learning by focusing on three key methods. 
These methods capture both popular concept-based models (\textit{TCAV}, \textit{CEM}), and algorithms for learning representations in structured datasets (\textit{Concept2Vec}): 

\begin{enumerate}[topsep=0pt,leftmargin=11pt,itemsep=0pt]
    \item \textbf{TCAV Vectors:} We compute concept bases through the Testing with Concept Activation Vectors (TCAV) algorithm~\cite{kim2018interpretability}. For each concept, this approach computes intermediate activations from a trained model and learns a linear separator in model space for points with and without a concept. This separator, known as a concept activation vector, serves as a high-dimensional representation of the concept (see Appendix~\ref{sec:model_details}). 
    \item \textbf{CEM Embeddings:} For each input sample $\mathbf{x}^{(i)}$, a CEM learns a high-dimensional representation of each concept that enables simultaneous prediction of the concept $\mathbf{c}^{(i)}$ and task label $y^{(i)}$~\cite{zarlenga2022concept}. 
    We construct ``global'' representations of each concept by letting the $j$-th concept vector be $\mathbf{v}_j = \sum_{i \in \mathcal{A}_j} \hat{\mathbf{z}}^{(i)}_j/|\mathcal{A}_j|$ where $\mathcal{A}_j = \{i^\prime | \mathbf{c}^{(i^\prime)}_j = 1 \}$ is the set of training samples with the $j$-th concept being active and $\hat{\mathbf{z}}^{(i)}_j \in \mathbb{R}^d$ is a CEM's predicted concept embedding for concept $j$ and sample $\mathbf{x}^{(i)}$. 
    In essence, we compute the mean concept embedding across all samples in the training set which contain the concept (details in Appendix~\ref{sec:model_details}).
    \item \textbf{Concept2Vec:} We employ an algorithm similar to word2vec~\citep{mikolov2013efficient} to learn concept representations based on inter-concept co-occurrences. We retrieve representations using a Skipgram architecture (details in Appendix~\ref{sec:model_details}).
\end{enumerate}

\paragraph{Ground-Truth Baseline: Label Bases}
To contextualise the performance of concept bases, we introduce the \textit{label basis} as a ground-truth baseline, allowing us to understand the gap across metrics between existing concept-based models and an idealised concept basis. 
This baseline achieves each of the desiderata mentioned in Section~\ref{sec:desiderata} and captures all known inter-concept relationships. Therefore, it serves as a good upper bound on the performance for each metric. 

We define the label basis based on concept co-occurrences under the assumption that similar concepts frequently co-occur. 
Formally, we define each label vector as $\mathbf{v}^{(i)} :=[\textbf{c}^{(1)}_{j},\textbf{c}^{(2)}_{j} \cdots \textbf{c}^{(n)}_{j}]^T \in \{0, 1\}^n$, where $\mathbf{v}^{(j)}$ is an $n$-dimensional vector whose $i$-th entry represents whether the $j$-th concept is active for training sample $\textbf{x}^{(i)}$. 
If $\mathbf{v}^{(j)}$ and $\mathbf{v}^{(j')}$ have small distance, then concepts $j$ and $j'$ co-occur frequently. 
Because concept co-occurrences are averaged across all data points, perturbations to a small subset of inputs should minimally impact concept co-occurrences, thereby not changing the similarities between label bases. 
Similarly, large corruptions to datasets should significantly alter concept co-occurrences, thereby changing the resulting label basis. 
The label basis allows us to understand the performance of other concept bases by placing an upper bound on their performance across metrics. 

\subsection{Datasets and Experimental Setup} 
We evaluate concept-based models using the metrics described in Section~\ref{sec:desiderata} on the following synthetic (MNIST, dSprites) and non-synthetic (CUB, CheXpert) vision tasks (more details in Appendix~\ref{sec:dataset_details}): 
\begin{enumerate}[topsep=0pt,leftmargin=11pt,itemsep=0pt]
    \item 
    \textbf{Coloured MNIST}~\citep{arjovsky2019invariant} is a vision dataset where each sample is a coloured hand-written digit. There are ten digit and ten colour concepts, with each digit being paired with a colour, leading to ten digit-colour combinations. This task allows us to study concept representations in a controlled setting. 
    \item 
    \textbf{dSprites}~\citep{dsprites17} is a vision dataset where each object has a shape, location, size, and orientation. We use these attributes as ground-truth concept annotations and construct a task label for this dataset as the base-10 representation of the sample's binary concept vector. We select ten combinations of concepts and sample images from these to study concept bases.     
    \item 
    \textbf{CUB}~\citep{wah2011caltech} is a bird image dataset in which each sample is annotated with its species. We are additionally provided with concept attributes describing different properties such as the bird's size, wing colour, head colour, etc. Of the 312 provided binary attributes, we select the same 112 attributes as~\citet{koh2020concept} to improve class balancing across concepts.  
    \item 
    \textbf{CheXpert}~\citep{irvin2019chexpert} is a medical chest radiograph dataset with concepts annotated from medical notes. As done by~\citet{chauhan2022interactive}, we use 13 of its annotations as concepts and predict ``no condition''. 
\end{enumerate}

\subsection{Capturing Simple Relationships (MNIST)}
Since the colour and digit concepts are perfectly correlated in the coloured MNIST dataset, this enables a simple setup to evaluate whether concept bases can recover these relationships. 
To quantify this, we evaluate the fraction of concepts where the most similar concept matches the ground truth, measuring similarity through the concept distance metric, $\delta_{v}$. 
We expect that concept bases should recover digit-colour similarities (e.g., ``\textit{digit 2}'' is most similar to ``\textit{colour 2}''). 

In Table~\ref{tab:mnist} we observe that CEM bases surprisingly fail to recover digit-colour pairs for 13\% of concepts.  
This implies that even in simple scenarios, the concept vectors arising from CEM bases fail to capture straightforward inter-concept relationships.
We additionally find that Concept2Vec, TCAV, and label bases successfully recover the similarity between digit and colour concepts for all pairs, resulting in high concept agreement.

\begin{table}[h]
    \centering 
    \caption{CEM concept bases fail to correctly capture inter-concept relationships, as they fail to enforce similar representations for the colour and digit concepts in coloured MNIST. 
    This leads to imperfect ($<100\%$) concept agreement between colour and digit concepts.}
    \label{tab:mnist}
    \begin{tabular}{@{}lr@{}}
    \toprule
    Basis & \multicolumn{1}{l}{Concept Agreement ($\uparrow$)} \\ \midrule
    TCAV & 100\% $\pm$ 0\% \\ 
    CEM & 87\% $\pm$ 19\% \\ 
    Concept2Vec & 100\% $\pm$ 0\% \\ 
    Label & 100\% $\pm$ 0\% \\ \bottomrule
    \end{tabular}
\end{table}

\begin{figure*}
    \centering

    \includegraphics[width=0.93\textwidth]{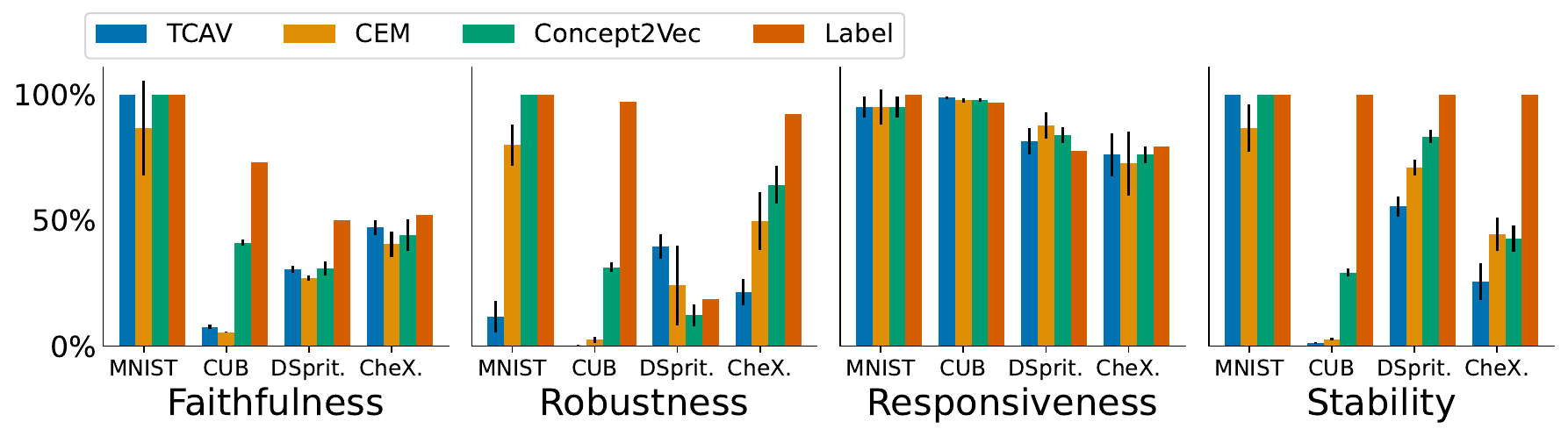}

    \caption{
        Representations from TCAV and CEM achieve significantly lower scores on faithfulness, robustness, and stability when compared with the label basis, highlighting the instability of these representations, an unexpected shortcoming. 
    }
    \label{fig:faithfulness}
\end{figure*}

\subsection{Metric-based Evaluation}
Figure~\ref{fig:faithfulness} reports the performance of various concept bases for our four metrics from Section~\ref{sec:desiderata}. We discuss them next.



\paragraph{Concept bases extracted from Label concept vectors perform well} 
In Figure~\ref{fig:faithfulness}, we find that label concept bases perform well across all metrics and datasets (except robustness on dSprites, which we discuss below). 
This result is best seen in CUB, where the label basis exhibits higher faithfulness than all other bases.
This trend validates using the label basis as a ground truth.
The gap between Label and other bases highlights the inability of existing concept-based models to pick up on inter-concept relationships. 

\paragraph{CEM and TCAV bases exhibit significant instability} 
On CUB and dSprites, TCAV and CEM bases exhibit low stability and robustness. 
This may be due to inherent fluctuations in each method:
TCAV models build linear separators via model activations, which can fluctuate across even similar data points, while the underlying representations in CEM models might fluctuate across iterations and data points. 
This implies that TCAV and CEM cannot consistently recover the same inter-concept relationships across trials. 

\paragraph{dSprites dataset is challenging across all Concept Bases} 
All concept bases struggle to capture inter-concept relationships in dSprites, highlighting the difficulty of developing good representations. 
The concepts in the dSprites dataset are weakly correlated, in contrast to the strong correlation found in CUB and MNIST, making it difficult to find significant inter-concept relationships. 
For robustness and faithfulness on dSprites, all concept bases achieve scores under $0.5$, which is lower than the scores for any other dataset. 

\subsection{Qualitative Evaluation}
\label{sec:qualitative}
We visualise concept bases by hierarchically clustering concept vectors to understand the inter-concept relationships recovered. 
We employ Ward's hierarchical clustering~\citep{ward1963hierarchical}, though algorithms such as single linkage clustering~\cite{gower1969minimum}, produce similar visualisations.

\begin{figure*}
    \centering
    \begin{subfigure}{90pt}
        \caption*{\textbf{TCAV}}
    \includegraphics[width=90pt]{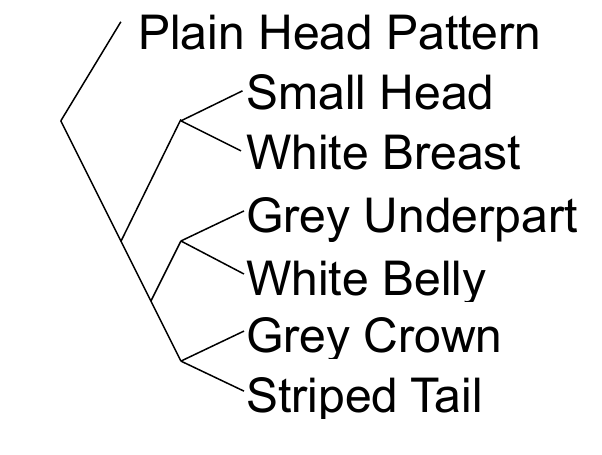}
    \end{subfigure}
    \hfil
    \begin{subfigure}{90pt}
        \caption*{\textbf{CEM}}
    \includegraphics[width=90pt]{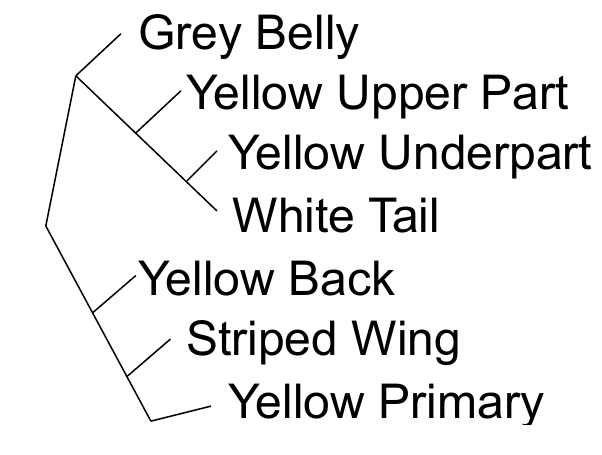}
    \end{subfigure}
    \hfil
    \begin{subfigure}{90pt}
        \caption*{\textbf{Concept2Vec}}
    \includegraphics[width=90pt]{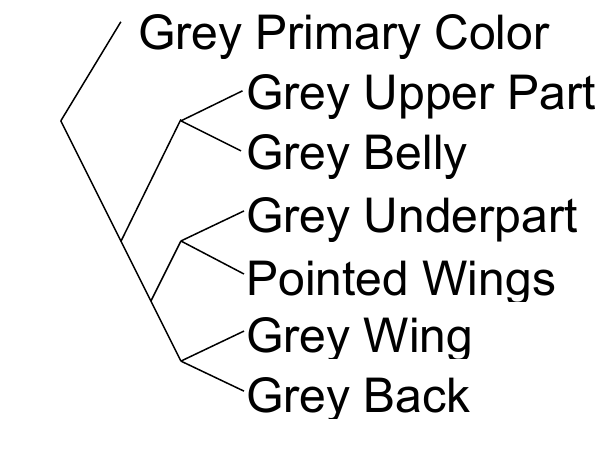}
    \end{subfigure}
    \hfil
    \begin{subfigure}{90pt}
        \caption*{\textbf{Label}}
    \includegraphics[width=90pt]{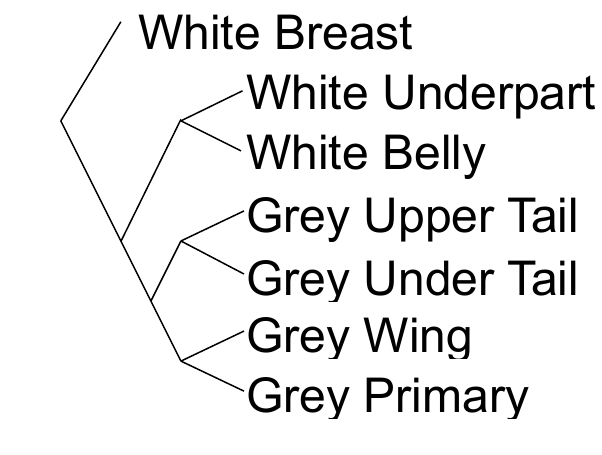}
    \end{subfigure}

    \caption{Different concept-based models learn different concept bases on the CUB dataset. Some bases reflect the semantic similarity between concepts (label, Concept2Vec), while others lack any pattern or structure (CEM, TCAV).}
    \label{fig:hierarchy}
\end{figure*}

Our qualitative evaluation confirms the findings from our quantitative metrics. 
Label bases accurately capture inter-concept relationships across datasets, with semantically similar concepts being adjacent in the hierarchy (Figure~\ref{fig:dataset}). 
Conversely, the CEM and TCAV concept bases fail to capture known inter-concept relationships, which is seen through the lack of structure in their visualised representations. 
Our results cast some on the reliability of these representations as faithfully representing the concepts. 
\section{Leveraging Inter-Concept Relationships for Concept Intervention}
We build on our analysis from Section~\ref{sec:experiments} to show that concept bases which recover inter-concept relationships can be useful for downstream tasks. 
We theoretically demonstrate the effectiveness of label bases for concept intervention, then empirically validate this through a novel algorithm which leverages concept bases to improve concept intervention. 

\subsection{Concept Interventions}
\label{sec:hierarchy_intervention}

CBMs, CEMs, and their more recent variants~\cite{havasi2022addressing, kim2023probabilistic, zarlenga2023learning} allow experts to correct concept mispredictions at test-time to improve task performance. 
This process, called \textit{concept intervention}, occurs, for example, when a clinician observes that a concept prediction disagrees with their analysis and corrects that misprediction. 
We can formalise this procedure by considering the problem of classifying $(\mathbf{x}^{(i)}, y^{(i)}, \mathbf{c}^{(i)})$ by first trying to predict all $k$ concepts $\mathbf{c}^{(i)}$, then having an expert impute ground-truth concept values for $r \leq k$ of these concepts.
The label predictor is then re-run using this mixture of predicted and ground-truth concept values.   



\textbf{Label Bases and Concept Intervention} \hspace{0.75em}
To provide intuition for the role of concept bases in concept intervention, we prove that properly calibrated concept bases, such as Label bases, allow us to predict concepts based on co-occurrence with expert-provided ground truths. 
This shows how inter-concept relationships can impact downstream task performance.
To achieve this, we first show that the similarity between label vectors is directly related to their rate of co-occurrence. 
We then show how concept bases similar to the label basis can be used to predict concept values.
\begin{theorem}
    Suppose an expert intervenes on $r$ concepts while the other $k-r$ concepts are predicted by a concept predictor $g$. 
    Consider label vectors, $\{\mathbf{v}^{(1)}, \cdots, \mathbf{v}^{(k)}\}$ learnt from $n$ data points. 
    Let the matrix $M \in \mathbb{R}^{k \times k}$ represent the co-occurrence matrix, where $M_{i,j} = P(\mathbf{c}_{j} = 1 | \mathbf{c}_{i} = 1)$, and let $\hat{M}_{i,j} = \frac{\mathbf{v}^{(i)} \cdot \mathbf{v}^{(j)}}{|\mathbf{v}^{(i)}|}$. 
    If we define the distance between co-occurrence matrices as $|M-\hat{M}| := \max_{i,j} |M_{i,j}-\hat{M}_{i,j}|$, and let $\theta := \min_{i, j} M_{i,j}$, then for any $\epsilon \in \mathbb{R}$ and $\delta \in \mathbb{R}$, it must be true that $\mathbf{P}[|M-\hat{M}| \geq \epsilon] \leq \delta$ whenever $n > \frac{3}{\epsilon^2 \theta} \mathrm{ln}(1-(1-\delta)^{\frac{1}{k^2}})$. 
    \label{thm:convergence}
\end{theorem}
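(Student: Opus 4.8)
The plan is to reduce the matrix-level statement to an entrywise concentration bound and then combine the $k^2$ entries. First I would unpack the estimator: reading $|\mathbf{v}^{(i)}|$ as the number of active entries of the label vector, the dot product $\mathbf{v}^{(i)} \cdot \mathbf{v}^{(j)} = \sum_{l=1}^{n} \mathbf{c}^{(l)}_i \mathbf{c}^{(l)}_j$ counts the samples in which concepts $i$ and $j$ co-occur, so $\hat{M}_{i,j}$ is exactly the empirical frequency of concept $j$ among the samples with concept $i$ active. Its expectation is $M_{i,j} = P(\mathbf{c}_j = 1 \mid \mathbf{c}_i = 1)$, so the task becomes showing that this ratio estimator concentrates around its mean uniformly over all pairs $(i,j)$.

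Fix a pair $(i,j)$ and condition on $n_i := |\mathbf{v}^{(i)}|$, the number of samples with concept $i$ active. Conditioned on this count, each such sample independently has concept $j$ active with probability $M_{i,j}$, so $n_i \hat{M}_{i,j}$ is distributed as $\mathrm{Binomial}(n_i, M_{i,j})$. I would then apply a multiplicative Chernoff bound to this sum, which is the source of the constant $3$ in the stated threshold, obtaining a tail that decays like $\exp(-n\theta\epsilon^2/3)$, where $\theta := \min_{i,j} M_{i,j}$ lower-bounds the smallest relevant probability and hence controls how the effective sample size enters the exponent. Because $n_i$ is itself a $\mathrm{Binomial}(n, P(\mathbf{c}_i = 1))$ count concentrated near its mean, restricting to a high-probability event on which $n_i$ is a constant fraction of its expectation upgrades this to a single-pair bound of the form $\mathbf{P}[|M_{i,j} - \hat{M}_{i,j}| \geq \epsilon] \leq \exp(-n\theta\epsilon^2/3)$.

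The final step combines the per-pair estimates. To control $|M - \hat{M}| = \max_{i,j}|M_{i,j} - \hat{M}_{i,j}|$ with total failure probability at most $\delta$, I would allocate a per-pair failure budget of $1 - (1-\delta)^{1/k^2}$ over the $k^2$ entries, which is precisely the quantity appearing inside the logarithm of the stated threshold. Requiring the single-pair tail to sit below this budget, namely $\exp(-n\theta\epsilon^2/3) \leq 1 - (1-\delta)^{1/k^2}$, and solving for $n$ by taking logarithms and rearranging, recovers the claimed sample-complexity condition.

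The main obstacle I anticipate is the ratio structure of $\hat{M}_{i,j}$: its denominator $|\mathbf{v}^{(i)}|$ is random, so the estimator is not a clean empirical average of i.i.d.\ bounded variables, and naive applications of Hoeffding or McDiarmid do not apply directly. Conditioning on $n_i$ is the cleanest route to remove this dependence, but it forces a separate argument that $n_i$ is not anomalously small, since otherwise the conditional sample size, and with it the exponent, could collapse. Carrying the event $\{n_i \gtrsim n\, P(\mathbf{c}_i = 1)\}$ through the union over pairs, and checking that its contribution is absorbed into the constant and into $\theta$, is the delicate bookkeeping on which the argument ultimately rests.
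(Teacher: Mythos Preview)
Your plan matches the paper's proof: apply a multiplicative Chernoff bound to each entry $\hat{M}_{i,j}$, obtaining a per-pair tail of the form $\exp(-n\theta\epsilon^2/3)$, then combine over the $k^2$ pairs via the per-pair budget $1-(1-\delta)^{1/k^2}$ and solve for $n$. The one noteworthy difference is that the paper does \emph{not} engage with the random-denominator issue you identify as the main obstacle: it simply asserts that $n\hat{M}_{i,j}$ is Binomial with mean $nM_{i,j}$ and proceeds, and it treats the $k^2$ per-pair events as independent so that the success probabilities literally multiply. Your conditioning on $n_i$ and the separate control of $\{n_i \gtrsim n\,P(\mathbf{c}_i=1)\}$ is therefore more careful than the paper's own argument; the extra bookkeeping you anticipate is real, but the published proof sidesteps it rather than resolving it.
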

This shows that similarities between label vectors converge to the co-occurrence of concepts, 
which implies the label bases can be leveraged to predict concepts.
We prove this by bounding $|M-\hat{M}|$ through concentration inequalities. 

Next, we show that small error approximations for concept co-occurrence lead to small error predictions for the presence of concepts, leading to accurate concept interventions.

\begin{theorem}
    Suppose that an expert intervenes on $r$ concepts, while the other $k-r$ concepts are predicted by a concept predictor $g$. 
    Suppose that our prediction for the co-occurrence matrix $M \in \mathbb{R}^{k \times k}$ is corrupted by Gaussian noise, $M' = M+\mathcal{N}(0,\epsilon)$. 
    For any concept $i$, let $\beta_{i} = \mathrm{argmax}_{ 1 \le j \le k} M_{i,j}$ and $\beta'_{i} = \mathrm{argmax}_{1 \le j \le k} M'_{i,j}$.
    Then $\sum_{i=k-r}^{k} M_{i,\beta_{i}} - M_{i,\beta'_{i}} \leq \sum_{i=k-r}^{k} \sum_{j=1}^{r} \Phi(\frac{M_{i,j}-M_{i,\beta{i}}}{\epsilon}) (M_{i,j}-M_{i,\beta_{i}})$, where $\Phi$ is the standard normal CDF.  
    \label{thm:error}
\end{theorem}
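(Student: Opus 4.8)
The plan is to read the left-hand side as an expectation over the Gaussian corruption, since $\beta'_i$ is a random index determined by the noisy matrix $M'$ whereas the right-hand side is deterministic. Writing $\eta_{i,j} := M'_{i,j}-M_{i,j}$, which is centred Gaussian with standard deviation $\epsilon$, the quantity $M_{i,\beta_i}-M_{i,\beta'_i}$ is the \emph{regret} incurred when the imputation rule picks, for concept $i$, the co-occurring concept that \emph{appears} best under noise rather than the one that is truly best. Because $\beta_i$ is the true maximiser over all candidates, this regret is always nonnegative and vanishes exactly when $\beta'_i=\beta_i$. The first step is therefore to decompose the per-concept regret over the possible selected indices:
\begin{equation*}
M_{i,\beta_i}-M_{i,\beta'_i} = \sum_{j=1}^{r}\bigl(M_{i,\beta_i}-M_{i,j}\bigr)\,\mathbf{1}[\beta'_i=j],
\end{equation*}
where the summand with $j=\beta_i$ contributes nothing because its gap coefficient is zero.

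Taking expectations and using linearity reduces the problem to controlling, for each competitor $j$, the misselection probability $\mathbb{P}[\beta'_i=j]$. Here is the crux: for the rule to choose $j$ over the genuinely optimal $\beta_i$, the noisy score of $j$ must surpass the benchmark optimal value $M_{i,\beta_i}$, so that $\{\beta'_i=j\}\subseteq\{M'_{i,j}\ge M_{i,\beta_i}\}$. Since $M'_{i,j}=M_{i,j}+\eta_{i,j}$ with $\eta_{i,j}\sim\mathcal{N}(0,\epsilon)$, I would evaluate this tail directly:
\begin{equation*}
\mathbb{P}\bigl[M'_{i,j}\ge M_{i,\beta_i}\bigr] = \mathbb{P}\bigl[\eta_{i,j}\ge M_{i,\beta_i}-M_{i,j}\bigr] = \Phi\!\left(\frac{M_{i,j}-M_{i,\beta_i}}{\epsilon}\right),
\end{equation*}
using the symmetry $1-\Phi(x)=\Phi(-x)$ to flip the sign inside the argument. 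Substituting this bound for $\mathbb{P}[\beta'_i=j]$ into the decomposition, then summing over $j=1,\dots,r$ and over the predicted concepts $i=k-r,\dots,k$, yields exactly the claimed right-hand side, with each gap $(M_{i,\beta_i}-M_{i,j})$ weighted by the Gaussian probability that noise lets concept $j$ overtake the true optimum.

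The main obstacle is justifying the containment $\{\beta'_i=j\}\subseteq\{M'_{i,j}\ge M_{i,\beta_i}\}$ cleanly, because $\beta'_i$ is defined by comparing $M'_{i,j}$ against the other \emph{noisy} scores $M'_{i,\ell}$ rather than against the deterministic benchmark $M_{i,\beta_i}$. I would handle this by treating $M_{i,\beta_i}$ as the value any competitor must beat to be a plausible winner, reducing the joint argmax comparison to the single pairwise Gaussian tail above; this reduction is precisely what produces the clean $\Phi(\cdot/\epsilon)$ factor rather than the $\sqrt{2}\,\epsilon$-scaled factor a fully symmetric noisy-versus-noisy comparison of $M'_{i,j}$ and $M'_{i,\beta_i}$ would give. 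Once this per-competitor tail bound is in place, the remaining steps — linearity of expectation, the vanishing diagonal term, and the final double summation over intervened candidates and predicted concepts — are routine.
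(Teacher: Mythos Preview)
Your approach is exactly the paper's: decompose the per-concept regret over the possible selected indices, take expectations, bound each misselection probability by the single-noise Gaussian tail $\Phi\bigl((M_{i,j}-M_{i,\beta_i})/\epsilon\bigr)$, and sum over $j$ and $i$. The paper's proof is a two-sentence sketch that does precisely this.

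You are right to flag the containment $\{\beta'_i=j\}\subseteq\{M'_{i,j}\ge M_{i,\beta_i}\}$ as the delicate step, and your hesitation is well placed: the containment is \emph{not} literally valid. If $\beta'_i=j$ then $M'_{i,j}\ge M'_{i,\beta_i}=M_{i,\beta_i}+\eta_{i,\beta_i}$, and when $\eta_{i,\beta_i}<0$ this does not force $M'_{i,j}\ge M_{i,\beta_i}$. A rigorous pairwise comparison of the two noisy scores would give $\Phi\bigl((M_{i,j}-M_{i,\beta_i})/(\sqrt{2}\,\epsilon)\bigr)$, which for the relevant negative arguments is \emph{larger} than the stated $\Phi\bigl((M_{i,j}-M_{i,\beta_i})/\epsilon\bigr)$, so the substitution is not an upper bound. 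The paper's proof makes exactly the same informal move you describe --- it writes ``upper bound this as the probability that $M_{i,j}+\mathcal{N}(0,\epsilon)\ge M_{i,\beta_i}$'' without further justification --- so you have reproduced the paper's argument faithfully, gap included. Your closing paragraph essentially diagnoses this; just be aware that ``treating $M_{i,\beta_i}$ as the value any competitor must beat'' is a heuristic, not a proof step.

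One minor bookkeeping point: your derivation produces gaps $(M_{i,\beta_i}-M_{i,j})\ge 0$ on the right-hand side, whereas the theorem as stated writes $(M_{i,j}-M_{i,\beta_i})\le 0$; the version your argument yields is the one that makes the inequality nontrivial.
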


Intuitively, this theorem says that when co-occurrence matrices, predicted through label bases, make an error $\epsilon$, concept $i$ goes from having correct prediction probability $M_{i,\beta_{i}}$ to $M_{i,\beta'_{i}}$. 
However, this difference in accuracy is bounded by the structure of the co-occurrence matrix itself, and so $M_{i,\beta_{i}} - M_{i,\beta'_{i}} \leq \sum_{j=1}^{r} \Phi(\frac{M_{i,j}-M_{i,\beta{i}}}{\epsilon}) (M_{i,j}-M_{i,\beta_{i}})$. 
When seen together with Theorem~\ref{thm:convergence}, these theoretical results suggest that well-constructed representations, such as label bases, allow us to have low error (Theorem~\ref{thm:convergence}), and this lets us predict concepts accurately (Theorem~\ref{thm:error}).
Proofs for both theorems can be found in Appendix~\ref{sec:numerics}. 


\begin{figure*}
    \centering
    \includegraphics[width=0.9\textwidth]{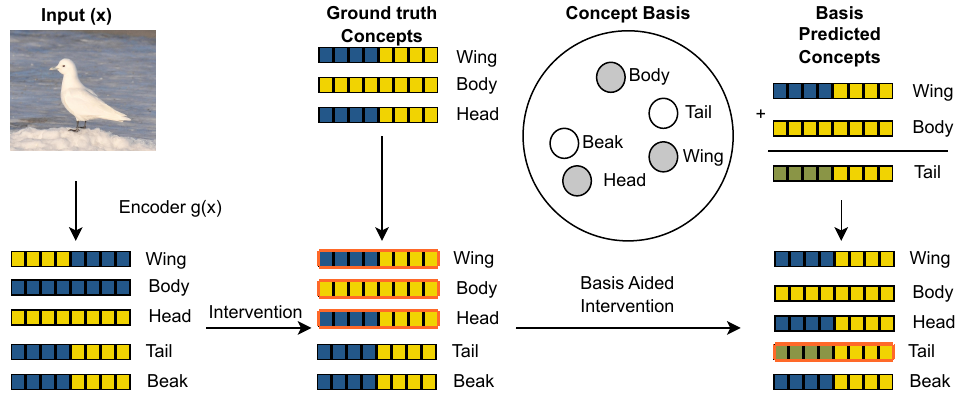}
    \vspace*{-2mm}
    \caption{
    We leverage similarities between concept representations to improve concept interventions. 
    We first predict a set of concept representations for input $\mathbf{x}$ (e.g., vectors in CEM) using concept predictor $g$. 
    Then, if an expert intervenes on $r=3$ of these concepts, ``\textit{wing}'', ``\textit{body}'', ``\textit{head}'', represented by grey circles in the concept basis, we predict ``\textit{tail}'' by finding its $q=2$ nearest neighbours
        in the concept basis (``\textit{wing}'' and ``\textit{body}'') and combining these concepts' representations.
    }
    \label{fig:intervention_diagram}
\end{figure*}

\textbf{Basis Aided Concept Intervention}  \hspace{0.75em}
Motivated by our theoretical results, we develop a novel algorithm for ``basis-aided intervention'', which leverages inter-concept relationships to improve concept intervention accuracy (detailed in Algorithm~\ref{alg:concepts} and Figure~\ref{fig:intervention_diagram}).
We leverage the similarity between concept vectors to impute concept predictions based on expert-provided concepts. 
To predict a concept $j$ for a data point $i$, we leverage the $q$ most similar \textit{intervened} concepts, measuring the similarity of concepts through the distance between concept vectors, $\delta_{v}(\mathbf{v}^{(j)},\mathbf{v}^{(j')})$. 
For each unintervened concept $j$, we predict the concept value $\textbf{c}^{(i)}_j$ by leveraging concept representations similar to concept $j$. 
Formally, let $I(j)$ be the set of the $q$ most similar concepts to concept $j$ that were also intervened upon. 
Our predicted concept is then $\hat{c}_{j}^{(i)} := \frac{1}{|I(j)|} \sum_{j^{\prime} \in I(j)}  \mathbf{c}^{(i)}_{j^{\prime}}$, which is used to make task predictions. 
When $|I(j)| = 0$, we rely on the concept predictor to predict concept values, $\hat{c}^{(i)}_{j} = g(\mathbf{x}^{(i)})_{j}$.
\vspace*{-3mm}
\begin{algorithm}[h]
   \caption{Basis Aided Concept Intervention}
   \label{alg:example}
\begin{algorithmic}
    \STATE {\bfseries Input:} Label predictor $f$, concept basis $B=\{\mathbf{v}^{(1)} \cdots \mathbf{v}^{(k)}\}$, $r$ concept values $\{\mathbf{c}_{1}^{(i)} \cdots \mathbf{c}_{r}^{(i)}\}$
    \STATE {\bfseries Output:} Predicted label $\hat{y}^{(i)}$ 
    \STATE Let $\hat{c}_{j}^{(i)} := \mathbf{c}_{j}^{(i)}$ for $1 \leq j \leq r$
    \STATE Let $s_{j,j'} := \delta_{v}(\mathbf{v}^{(j)}, \mathbf{v}^{(j')})$ for $1 \leq j,j' \leq k$
    \STATE Let $s_{j,j} = \infty, \; \forall \; j \in \{1,\cdots, k\}$
        
    \FOR{$j=r+1$ {\bfseries to} $k$}
        \STATE Let $I(j)$ be the set of indices corresponding to the $q$ smallest values of $\{s_{j,1} \cdots s_{j,r}\}$
        \STATE Let $\hat{c}_{j}^{(i)} := \frac{1}{|I(j)|} \sum_{j^{\prime} \in I(j)}  \mathbf{c}^{(i)}_{j^{\prime}}$
    \ENDFOR
    
    \STATE {\bfseries Return:} $f(\{\hat{c}_{1}^{(i)} \cdots \hat{c}_{k}^{(i)}\})$
\end{algorithmic}
\label{alg:concepts}
\end{algorithm}
\vspace*{-3mm}

\subsection{Empirical Performance}
\paragraph{Experimental Setup}
We evaluate Algorithm~\ref{alg:concepts} by analysing its concept intervention accuracy on the MNIST, CUB, and dSprites datasets (we place our CheXpert evaluation in Appendix~\ref{sec:intervention_ablation} due to the minimal impact of concept interventions). 
For all datasets, we train a CEM model and place details in Appendix~\ref{sec:intervention_ablation}. 


\paragraph{Concept Bases for Concept Interventions}

In Figure~\ref{fig:intervention_dataset} and Figure~\ref{fig:intervention_label} we demonstrate that the quality of representations learnt by concept-based models impacts concept intervention accuracy, as Label bases improve accuracy, while the TCAV and CEM bases hurt accuracy. 
Label bases have the largest impact when $20\%$ to $80\%$ of concepts are known; knowing too few concepts provides too little information for intervention, while knowing most concepts leaves little room for improvement. 
Label bases improve CUB accuracy, outperforming other concept bases, and show the impact of concept bases upon concept intervention.

\begin{figure*}
    \centering
    \subfloat[\centering CUB]{\includegraphics[width=0.3 \textwidth]{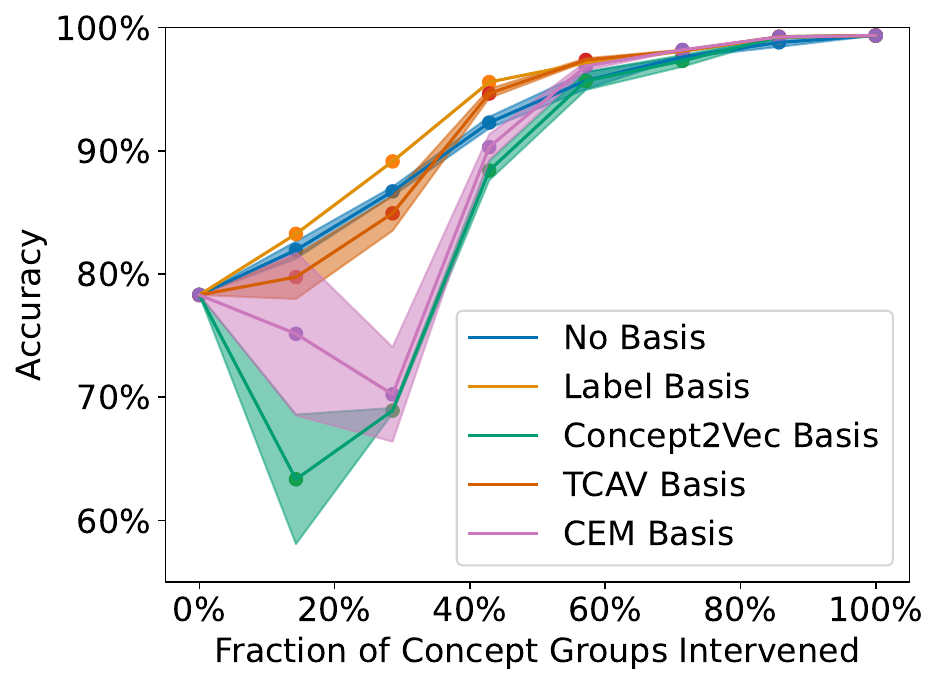}}
    \subfloat[\centering Coloured MNIST]{\includegraphics[width=0.3 \textwidth]{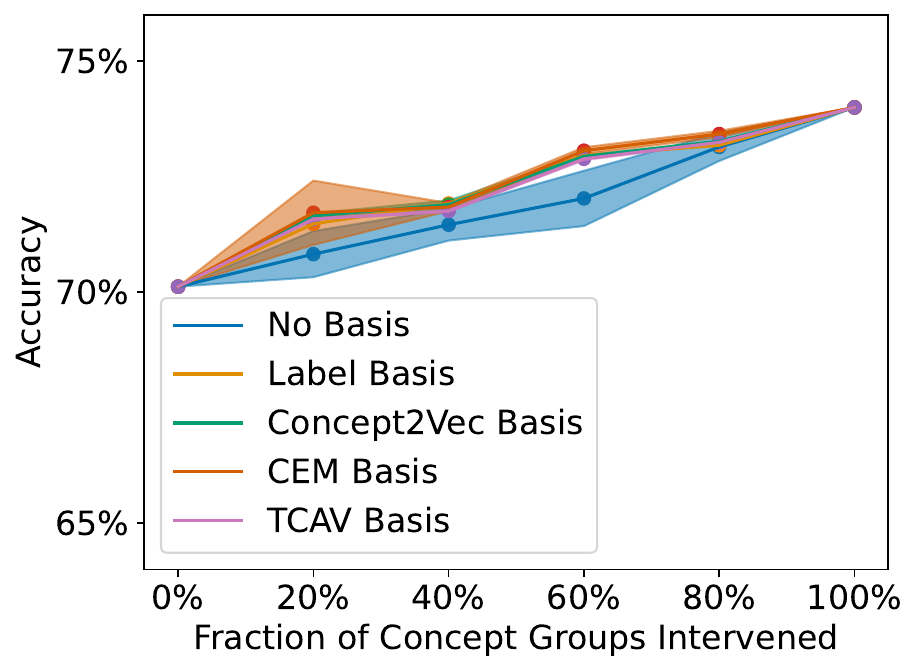}}
    \subfloat[\centering dSprites]{\includegraphics[width=0.3 \textwidth]{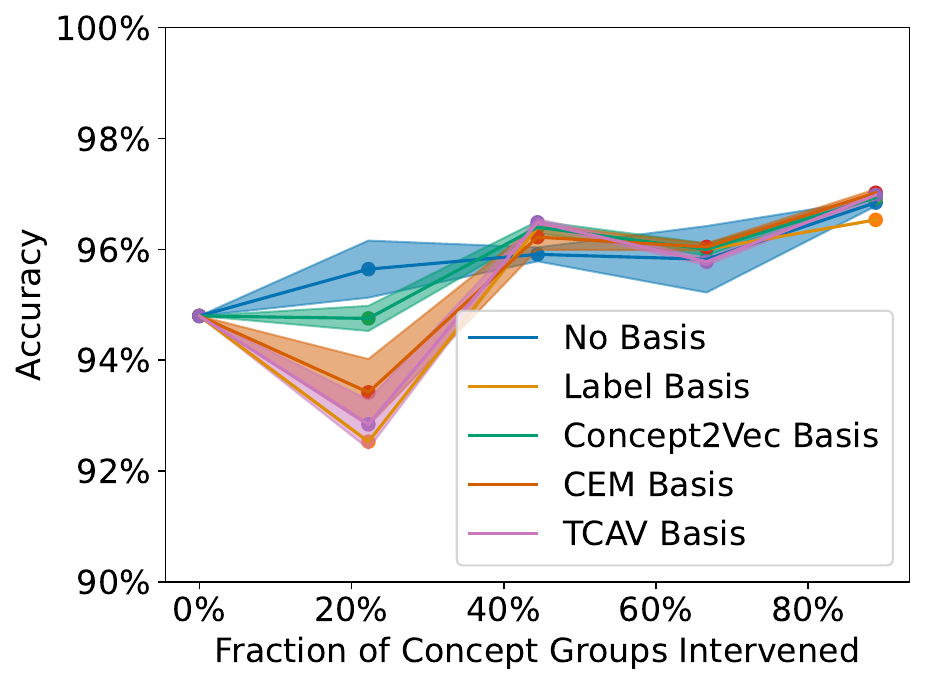}}  
    \caption{
        On the CUB and MNIST datasets, label bases improve concept intervention accuracy when compared with interventions made without concept bases or using other concept bases. Additionally, poorly constructed concept bases (TCAV, CEM) hurt accuracy by up to $10\%$ in CUB. 
    }
        \label{fig:intervention_dataset}
\end{figure*}

\begin{figure}
\centering
\includegraphics[width=0.3\textwidth]{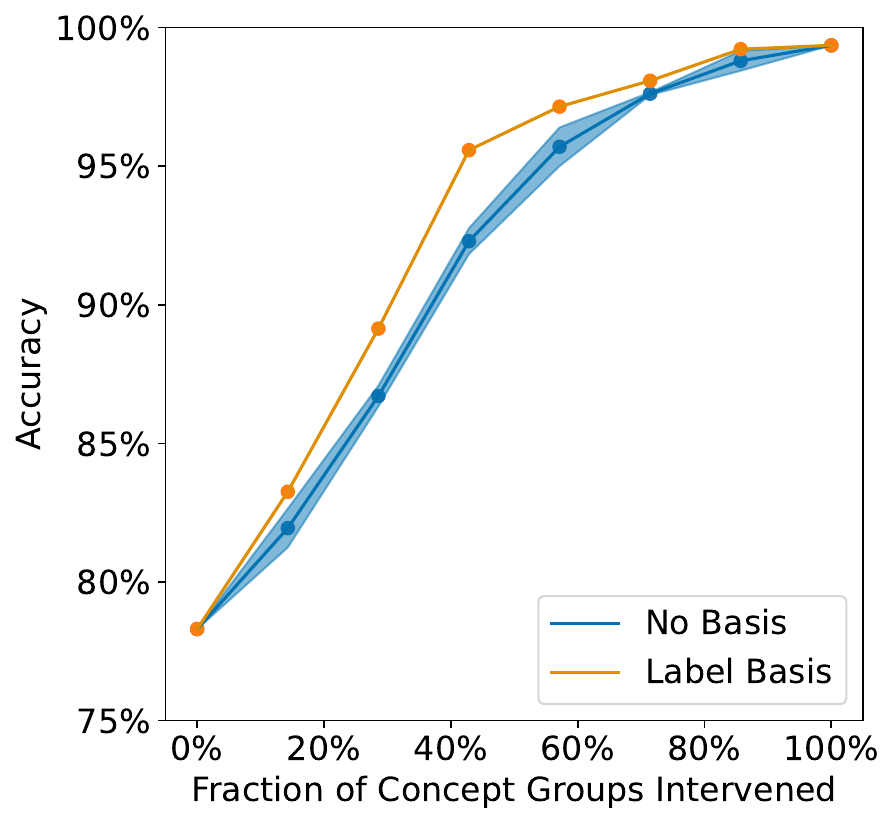}
\caption{Label bases improve concept intervention accuracy on CUB when the fraction of concept groups intervened is from $20\%$ to $80\%$. This demonstrates the potential for inter-concept relationships to assist with downstream tasks.}
\label{fig:intervention_label}
\end{figure}

For dSprites and coloured MNIST, label bases generally improve accuracy, similar to the trends found in CUB (Figure~\ref{fig:intervention_dataset}). 
dSprites presents more complex concept correlations than either the MNIST or CUB datasets, and in this dataset, label bases improve accuracy compared to the baseline when the number of ground truth concepts is more than $20\%$. 
Similarly, coloured MNIST presents a simple scenario where the inter-concept relationships are simple; label bases pick up on these patterns, resulting in an improvement in accuracy. 
Across datasets, better-performing representations lead to improved concept intervention accuracy, reflecting the utility of learning such representations. 

\paragraph{Training time and Concept Bases}
Finally, we investigate whether label bases can improve computational efficiency for concept intervention. 
We train models for $25$, $50$, and $100$ epochs and measure the impact of label bases on concept intervention accuracy. Our results, shown in Figure~\ref{fig:intervention_label}, suggest that label bases lead to larger accuracy improvements for models trained for fewer epochs. 
Notably, training models for fewer epochs and leveraging label bases improves accuracy more than training models for longer, showing that the gains from concept bases cannot be replicated through additional computational resources. 
If models are deployed in conjunction with experts at test-time, concept bases can save computational resources: models trained for fewer epochs can leverage concept bases, and still be competitive with resource-heavy models. 

\begin{figure}
\centering
\includegraphics[width=0.3\textwidth]{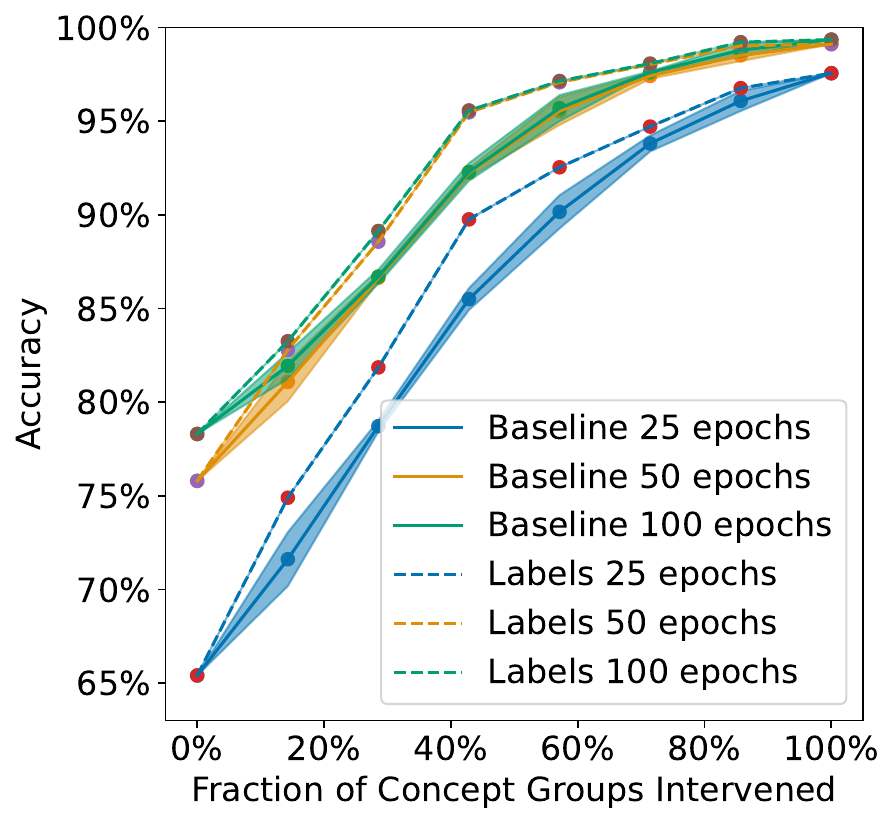}
\caption{In CUB, label bases have a larger impact on concept intervention accuracy for models trained for fewer epochs. The impact of label bases cannot be replaced through additional training epochs, as models trained for 50 epochs with label bases perform better than those trained for 100 epochs without label bases.}
\label{fig:intervention_epochs}
\end{figure}

\section{Discussion and Conclusion}
\paragraph{Limitations}
Throughout our paper, we focus on the application of concept bases across several image-based datasets, focusing on these to capture a diversity of applications. 
However, understanding the performance of such methods across other modalities, such as text and sequence-based data would be useful. 
For text-based data, future work could investigate these models through datasets such as Omniglot~\cite{lake2015human} and CLEVR~\cite{johnson2017clevr}. 
Additionally, a variety of new concept-based models have recently arisen which might have better representations than either CEM or TCAV~\cite{havasi2022addressing, kim2023probabilistic, zarlenga2023learning}; we focus on CEM and TCAV here due to their popularity, but future work could investigate the representations in these new methods.

\paragraph{Conclusion}
\label{sec:conclusion}
In this work, we explored whether representations learnt by popular concept-learning methods capture known inter-concept relationships. 
Unexpectedly, we found that such methods fail to capture these relationships, highlighting an important area for future research. 
Failing to capture these relationships is a missed opportunity for concept-learning methods, as we demonstrated that learning good representations can be useful for downstream applications. 
We theoretically and empirically showed that good representations significantly boost a CEM's receptiveness to concept intervention.
This work highlights the importance of inter-concept relationships and brings forth the need to consider such relationships in future concept-based models.

\newpage
\section*{Impact Statement}
Our paper analyses the foundations of interpretability models within concept-based learning. 
Understanding such models is critical in ensuring safe deployment, so that machine learning models can be trusted and understood, especially in safety-critical situations. 
Our paper works towards this goal by trying to understand how these interpretability methods work and the types of patterns captured by them. 
Such analysis could potentially lead to improved interpretability methods and safer machine learning deployments. 

\section*{Acknowledgements}
The authors would like to thank Katie Collins, George Barbulescu, and Mehtaab Sawhney for their suggestions and discussions on the paper. 
During the time of this work, NR was supported by a Churchill Scholarship, and NR additionally acknowledges support from the NSF GRFP Fellowship. 
MEZ acknowledges support from the Gates Cambridge Trust via a Gates Cambridge Scholarship. 

\bibliography{references}
\bibliographystyle{icml2024}

\appendix 
\newpage 
\section{Metric Details}
\label{sec:metric_details}
\begin{enumerate} 
\item \textbf{Robustness} - We measure the robustness metric by developing a separate, robustness dataset, and comparing inter-concept relationships arising from such a dataset, against a ground truth dataset. 
For example, we compute the robustness of the Label basis by computing the Label basis on the vanilla CUB and robust CUB datasets, compute inter-concept relationships for each, and then compute the similarity of the relationships. 
We develop this robustness dataset by applying two alterations together: we flip concepts at random with probability $p$, and we add Gaussian Noise, with standard deviation $\sigma$. 
We let $p=0.01$ for our experiments and experiment with different values in Appendix~\ref{sec:metric_hyperparameter}, and let $\sigma=50$. 
We do this so that both $\mathbf{c}$ and $\mathbf{x}$ are perturbed. 
\item \textbf{Responsiveness} - We develop the responsiveness metric by randomly altering the image features and concept labels. We let the concept labels be randomly distributed according to a Bernoulli distribution with $p=0.5$, while for the images, we let each pixel be uniformly distributed. We do this to measure whether drastic changes to a dataset's images and concepts impact the underlying inter-concept relationships.   
\item \textbf{Faithfulness} -  For computations of distances between concept bases, we let $t=1$ for the coloured MNIST dataset, while we let $t=3$ for all other datasets. 
We ablate this selection of $t$ in Appendix~\ref{sec:t_choice}. 
For the faithfulness metric, we set $t$ to be 1 for the MNIST dataset, as each digit should be close to its corresponding colour concept, while we set $t=3$ for all other datasets (we explore the impact of $t$ in Appendix~\ref{sec:t_choice}). 
To compute Shapley values, we train a VGG16 concept predictor for all datasets~\citep{simonyan2014very} for 25 epochs with a learning rate of $0.001$ and an Adam optimizer, using this as our concept predictor $g$.
We select these as they avoid biasing towards any particular concept-based model, such as CEM models. 
\end{enumerate} 

\section{Concept Basis Details}
\label{sec:model_details}
\begin{enumerate} 
\item \textbf{Concept2Vec} - We learn representations for Concept2Vec using the Skipgram architecture~\cite{mikolov2013efficient}. we train a model to predict whether two concept pairs come from the same data point or different data points~\citep{mikolov2013efficient}. 
Using this architecture, we develop embeddings for concepts by encouraging co-occurring entities to be nearly parallel in embedding space.
We train this architecture for 25 epochs for each dataset, and we note that additional training epochs did not result in significantly different embeddings.
\item \textbf{CEM} - We train CEM models for $25$ epochs, and let the positive embeddings represent each concept. 
\item \textbf{TCAV} - For the TCAV basis, we use a VGG16 backend~\cite{simonyan2014very} and compute concept activation vectors by comparing them with three reference concepts selected randomly. 
\end{enumerate} 

\section{Dataset Details}
\label{sec:dataset_details}
We provide details on each of our datasets and detail the train-test splits for each. 
For the dSprites and CheXpert datasets, we use 2,500 data points for the training, and 750 for validation and testing. 
For the MNIST dataset, we use 60,000 data points for training and 10,000 data points for validation. 
For CUB, we use 4,796 data points for training, 1,198 for validation, and 5,794 data points for testing. 
We present examples from each dataset in Figure~\ref{fig:dataset}. 

\begin{figure*}[t!]
    \centering
    \begin{subfigure}{90pt}
        \caption*{MNIST}
    \includegraphics[width=90pt]{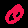}\\[3pt]
    \includegraphics[width=90pt]{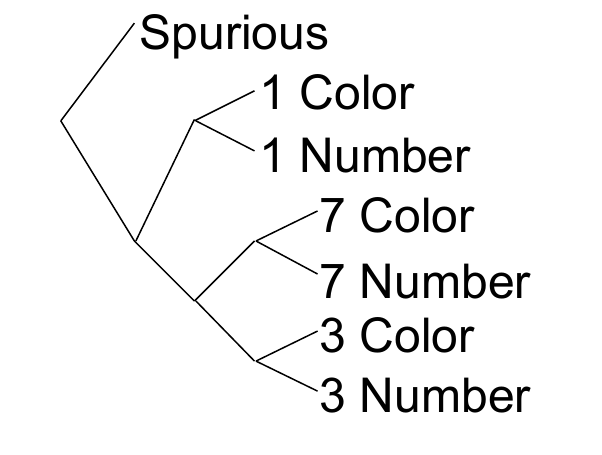}
    \end{subfigure}
    \hfil
    \begin{subfigure}{90pt}
        \caption*{CUB}
    \includegraphics[width=90pt]{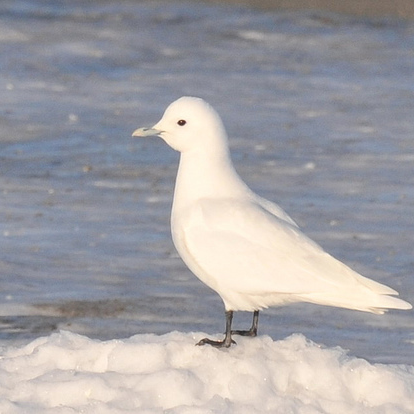}\\[3pt]
    \includegraphics[width=90pt]{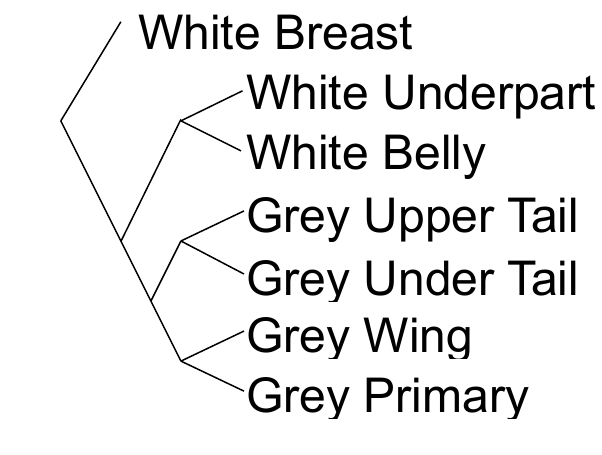}
    \end{subfigure}
    \hfil
    \begin{subfigure}{90pt}
        \caption*{dSprites}
    \includegraphics[width=90pt]{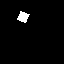}\\[3pt]
    \includegraphics[width=90pt]{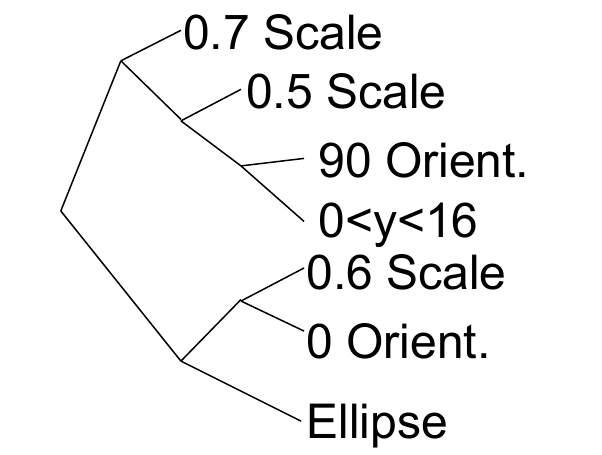}
    \end{subfigure}
    \hfil
    \begin{subfigure}{90pt}
        \caption*{CheXpert}
    \includegraphics[width=90pt]{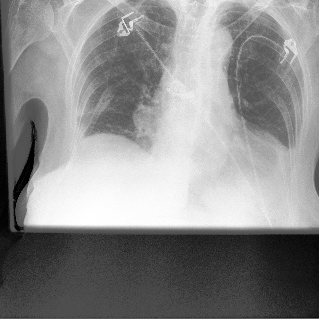}\\[3pt]
    \includegraphics[width=90pt]{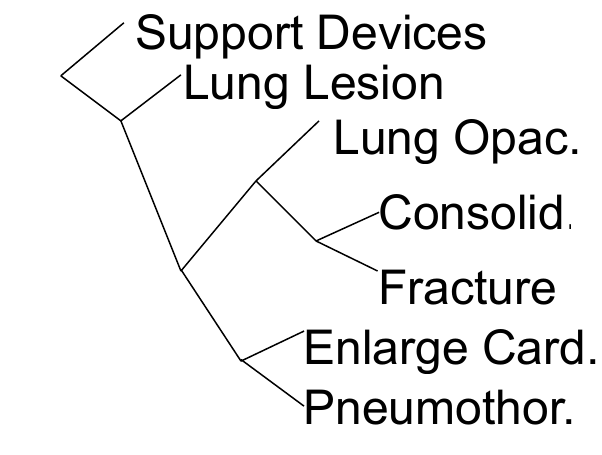}
    \end{subfigure}

    \caption{We visualise examples from the label concept basis across different datasets.}
    \label{fig:dataset}
\end{figure*}

\section{Experimental Details}
For the intervention experiment, we split the 112 concepts available in CUB into 28 concept groups of mutually exclusive concepts following prior work~\citep{koh2020concept}. 
Then, we evaluate CEM's test accuracy across three seeds as we vary the size of the set of intervened concept groups between 0 and 28 in increments of 4, selecting concept groups on each round uniformly at random without replacement. 
For all intervention experiments, we let $r=10$, and we ran all experiments with three different seeds. 
We run our GPU experiments on either an NVIDIA TITAN Xp with 12 GB of GPU RAM on Ubuntu 20.04, or NVIDIA A100-SXM, using at most 8 GB of GPU with Red Hat Linux 8. 
Each run takes at most 2 hours, though most finish in under 45 minutes. 
In total, including preliminary experiments, we run 200-300 hours of GPU experiments.
For concept intervention experiments we use the PyTorch library~\cite{paszke2019pytorch}. 

\section{Impact of Metric Hyperparameters}
\label{sec:metric_hyperparameter}
We evaluate our selection for the rate of concept flipping in the robustness metric. 
We investigate the impact of varying the concept flip rate upon the robustness metric for two bases: the Label basis, and the Concept2Vec basis. 
We select these two methods as they rely only on the concept annotations to compute representations. 
We vary the rate of concept flipping in $\{0.01,0.05,0.1,0.25,0.5\}$ and measure the robustness metric on the CUB dataset. 

\begin{figure}
    \centering
    \includegraphics[width=0.4\textwidth]{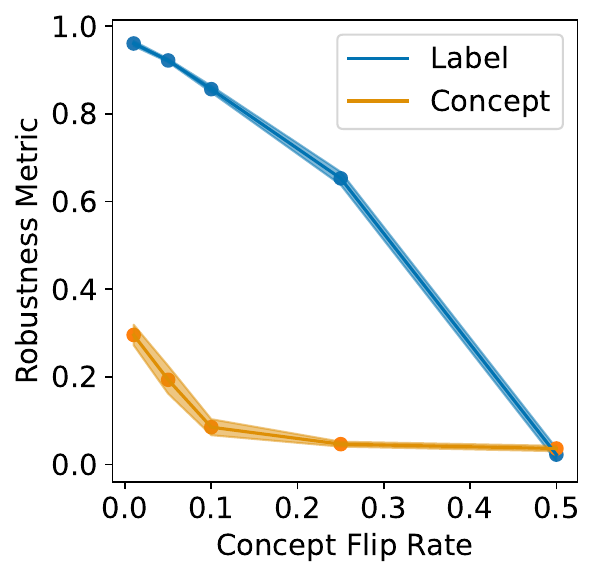}
    \caption{We evaluate the choice of robustness metric hyperparameter by varying the rate of concept flipping, and measuring the resulting robustness metric on the CUB dataset. We find that Label bases maintain inter-concept relationships for values of concept flipping under $0.1$, while Concept2Vec is less robust, as it decreases by $0.2$ in robustness metric. }
    \label{fig:flip_ablation}
\end{figure}

Our results demonstrate that the Label basis is more robust to concept perturbations than the Concept2Vec basis, and this holds across concept flip rates. 
While the Label basis maintains a relatively high robustness metric until flipping 25\% or 50\% of the concepts, flipping only 10\% of the concepts results in the Concept2Vec method having a low robustness score. 
We find that Label bases are more robust than Concept2Vec bases across concept flip rates, showing that our results are not sensitive to the choice of concept flip rate. 

\section{Impact of Distance Metric Choices}
\label{sec:distance_choice}
\begin{figure}
    \centering
    \includegraphics[width=200pt]{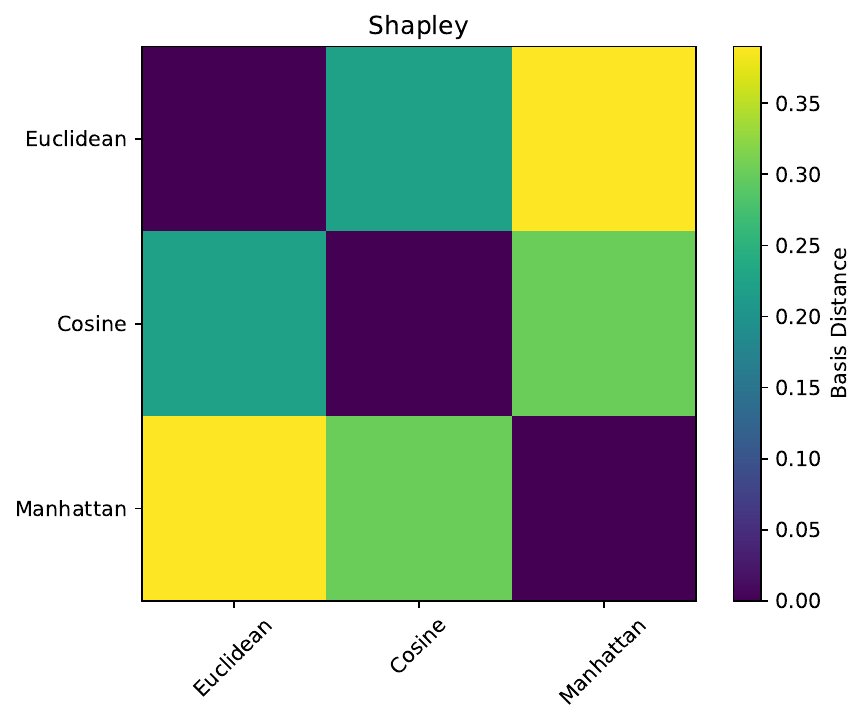}
    \caption{We compare the distance between various concept bases when varying the similarity metric used between concept vectors ($\delta_v$), to understand the sensitivity of our concept bases to $\delta_v$. We see that cosine and Euclidean distances are similar, while Manhattan distances are more dissimilar.}
    \label{fig:distance_metric}
\end{figure}
To understand whether the construction of a concept basis is sensitive to $\delta_v$, the distance metric between concept vectors, we vary the choice of $\delta_v$ and then compare the resulting concept basis. 
We keep the concept vectors the same while varying $\delta_v$, which changes the set of closest concepts. 
We try out three values for this: Euclidean, Manhattan, and cosine distances, which represent three common distance metrics between vectors. 
We compare the distances between bases constructed from each of these metrics in Figure~\ref{fig:distance_metric}, which shows that $\delta_v$ has an impact on basis construction and similarities. 
We find that the Cosine and Euclidean distances are fairly similar, while the Manhattan distance diverges from both of these. 
This indicates that our choice of distance metric, $\delta_{v}$ might impact our results, but only if we chose the Manhattan distance. 

\section{Impact of Concept Distance Choice}
\label{sec:t_choice}
We vary the value of $t$ used during the computation of the faithfulness metric. 
We vary $t$ between 1 and 7 for CUB using the label basis method and plot our results in Figure~\ref{fig:metric_ablation}. We find large jumps in faithfulness from $t=1$ to $t=3$, but then see small increases and decreases afterwards.
\begin{figure}
    \centering
    \includegraphics[width=150pt]{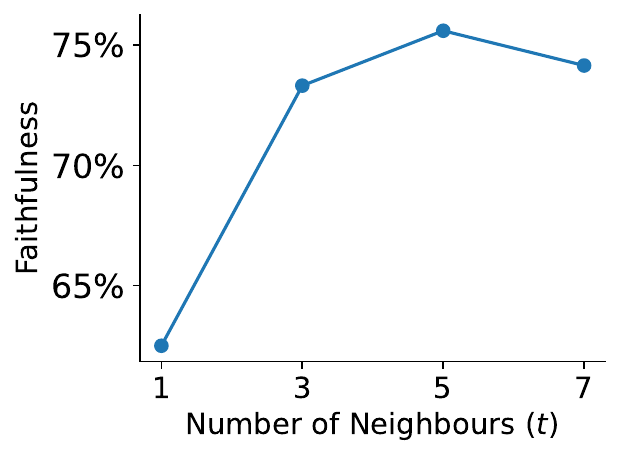}
    \caption{We vary $t$ and compute the faithfulness of the label basis on the CUB dataset. We see that $t$ from 1 to 3 have significant impacts on faithfulness, while values past 3 have little effect.}
    \label{fig:metric_ablation}
\end{figure}

\begin{table*}
    \centering
    \caption{Concept-based models (TCAV, CEM) produce representations that achieve lower scores across all metrics when compared to gold-standard baselines (label).}
    \label{tab:evaluation}
    \begin{adjustbox}{width=\textwidth, center}
        \begin{tabular}{@{}lcccccccc@{}}
            \toprule 
                        & \multicolumn{4}{c}{MNIST}                     & \multicolumn{4}{c}{CUB}               \\ \cmidrule(l){2-5}
                        \cmidrule(l){6-9}
             &
              \multicolumn{1}{l}{Faith. $\uparrow$} &
              \multicolumn{1}{l}{Robust $\uparrow$} &
              \multicolumn{1}{l}{Respons. $\uparrow$} &
              \multicolumn{1}{l}{Stab. $\uparrow$} &
              \multicolumn{1}{l}{Faith. $\uparrow$} &
              \multicolumn{1}{l}{Robust $\uparrow$} &
              \multicolumn{1}{l}{Respons. $\uparrow$} &
              \multicolumn{1}{l}{Stab. $\uparrow$} \\ \cmidrule(l){1-9} 
TCAV & 1.00 $\pm$ 0.00 & 0.12 $\pm$ 0.06 & 0.95 $\pm$ 0.04 & 1.00 $\pm$ 0.00 & 0.08 $\pm$ 0.01 & 0.00 $\pm$ 0.00 & 0.99 $\pm$ 0.00 & 0.01 $\pm$ 0.00 \\
CEM & 0.87 $\pm$ 0.19 & 0.80 $\pm$ 0.08 & 0.95 $\pm$ 0.07 & 0.87 $\pm$ 0.09 & 0.06 $\pm$ 0.00 & 0.02 $\pm$ 0.01 & 0.98 $\pm$ 0.01 & 0.03 $\pm$ 0.00 \\
Concept2Vec & 1.00 $\pm$ 0.00 & 1.00 $\pm$ 0.00 & 0.95 $\pm$ 0.04 & 1.00 $\pm$ 0.00 & 0.41 $\pm$ 0.01 & 0.31 $\pm$ 0.02 & 0.98 $\pm$ 0.01 & 0.29 $\pm$ 0.02 \\
Label & \textbf{1.00 $\pm$ 0.00} & \textbf{1.00 $\pm$ 0.00} & \textbf{1.00 $\pm$ 0.00} & \textbf{1.00 $\pm$ 0.00} & \textbf{0.73 $\pm$ 0.00} & \textbf{0.97 $\pm$ 0.00} & \textbf{0.97 $\pm$ 0.00} & \textbf{1.00 $\pm$ 0.00} \\ \bottomrule
\\
            
                        & \multicolumn{4}{c}{dSprites}                     & \multicolumn{4}{c}{CheXpert}               \\ \cmidrule(l){2-5}
                        \cmidrule(l){6-9}
             &
              \multicolumn{1}{l}{Faith. $\uparrow$} &
              \multicolumn{1}{l}{Robust $\uparrow$} &
              \multicolumn{1}{l}{Respons. $\uparrow$} &
              \multicolumn{1}{l}{Stab. $\uparrow$} &
              \multicolumn{1}{l}{Faith. $\uparrow$} &
              \multicolumn{1}{l}{Robust $\uparrow$} &
              \multicolumn{1}{l}{Respons. $\uparrow$} &
              \multicolumn{1}{l}{Stab. $\uparrow$} \\ \cmidrule(l){1-9} 
TCAV & 0.30 $\pm$ 0.01 & \textbf{0.40 $\pm$ 0.05} & 0.81 $\pm$ 0.05 & 0.56 $\pm$ 0.04 & 0.47 $\pm$ 0.03 & 0.21 $\pm$ 0.05 & 0.76 $\pm$ 0.08 & 0.26 $\pm$ 0.07 \\
CEM & 0.27 $\pm$ 0.01 & 0.24 $\pm$ 0.16 & 0.88 $\pm$ 0.05 & 0.71 $\pm$ 0.03 & 0.41 $\pm$ 0.05 & 0.50 $\pm$ 0.12 & 0.73 $\pm$ 0.13 & 0.44 $\pm$ 0.07 \\
Concept2Vec & 0.31 $\pm$ 0.03 & 0.12 $\pm$ 0.04 & 0.84 $\pm$ 0.03 & 0.83 $\pm$ 0.03 & 0.44 $\pm$ 0.06 & 0.64 $\pm$ 0.08 & 0.76 $\pm$ 0.03 & 0.43 $\pm$ 0.05 \\
Label & \textbf{0.50 $\pm$ 0.00} & 0.19 $\pm$ 0.00 & \textbf{0.78 $\pm$ 0.00} & \textbf{1.00 $\pm$ 0.00} & \textbf{0.52 $\pm$ 0.00} & \textbf{0.92 $\pm$ 0.00} & \textbf{0.79 $\pm$ 0.00} & \textbf{1.00 $\pm$ 0.00} \\ \bottomrule
                        
        \end{tabular}
    \end{adjustbox}
\end{table*}

\section{Intervention Theorems}
\label{sec:numerics}
\begin{theorem*}
    Suppose an expert intervenes on $r$ concepts while the other $k-r$ concepts are predicted by a concept predictor $g$. 
    Consider label vectors, $\{\mathbf{v}^{(1)}, \cdots, \mathbf{v}^{(k)}\}$ learnt from $n$ data points. 
    Let the matrix $M \in \mathbb{R}^{k \times k}$ represent the co-occurrence matrix, where $M_{i,j} = P(\mathbf{c}_{j} = 1 | \mathbf{c}_{i} = 1)$, and let $\hat{M}_{i,j} = \frac{\mathbf{v}^{(i)} \cdot \mathbf{v}^{(j)}}{|\mathbf{v}^{(i)}|}$. 
    If we define the distance between co-occurrence matrices as $|M-\hat{M}| := \max_{i,j} |M_{i,j}-\hat{M}_{i,j}|$, and let $\theta := \min_{i, j} M_{i,j}$, then for any $\epsilon \in \mathbb{R}$ and $\delta \in \mathbb{R}$, it must be true that $\mathbb{P}[|M-\hat{M}| \geq \epsilon] \leq \delta$ whenever $n > \frac{3}{\epsilon^2 \theta} \mathrm{ln}(1-(1-\delta)^{\frac{1}{k^2}})$. 
\end{theorem*}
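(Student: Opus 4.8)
The plan is to show that each estimated co-occurrence entry $\hat{M}_{i,j}$ concentrates around its true value $M_{i,j}$, and then apply a union bound over all $k^2$ pairs to control the maximum deviation. First I would unpack the definition of the label vectors. Recall that $\mathbf{v}^{(j)} = [\mathbf{c}^{(1)}_j, \ldots, \mathbf{c}^{(n)}_j]^T$, so $\mathbf{v}^{(i)} \cdot \mathbf{v}^{(j)} = \sum_{\ell=1}^n \mathbf{c}^{(\ell)}_i \mathbf{c}^{(\ell)}_j$ counts the number of training points where both concepts $i$ and $j$ are active, while $|\mathbf{v}^{(i)}|$ (interpreting this as a sum of binary entries, i.e. the $\ell_1$ count, or equivalently $|\mathbf{v}^{(i)}|^2$ for the $\ell_2$ norm since entries are $0/1$) counts the points where concept $i$ is active. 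Hence $\hat{M}_{i,j}$ is precisely the empirical conditional frequency $\frac{\#\{\ell : \mathbf{c}^{(\ell)}_i = 1, \mathbf{c}^{(\ell)}_j = 1\}}{\#\{\ell : \mathbf{c}^{(\ell)}_i = 1\}}$, which is an unbiased estimator of the true conditional probability $M_{i,j} = P(\mathbf{c}_j = 1 \mid \mathbf{c}_i = 1)$. This is the empirical-frequency interpretation that makes concentration applicable.

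Next I would fix a pair $(i,j)$ and bound $\mathbf{P}[|\hat{M}_{i,j} - M_{i,j}| \geq \epsilon]$. Conditioned on the set of samples where concept $i$ is active, the indicators $\mathbf{c}^{(\ell)}_j$ are i.i.d. Bernoulli with mean $M_{i,j}$, so $\hat{M}_{i,j}$ is an empirical mean of $N_i := |\mathbf{v}^{(i)}|$ such indicators. A multiplicative Chernoff/Bernstein bound gives a tail of the form $2\exp(-\epsilon^2 N_i / 3)$ (the factor $3$ in the hypothesis strongly suggests a relative Chernoff bound where the variance proxy contributes a $1/3$). The subtlety is that $N_i$ is itself random. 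I would handle this by lower-bounding $N_i$: since $M_{i,i} = 1$ and the marginal frequency of concept $i$ relates to $\theta = \min_{i,j} M_{i,j}$, one argues that the number of active samples for concept $i$ is at least on the order of $\theta n$, so that the effective sample size satisfies $N_i \gtrsim \theta n$. Substituting this into the per-entry tail yields a bound like $2\exp(-\epsilon^2 \theta n / 3)$, where the $\theta$ in the denominator of the sample complexity arises precisely from this worst-case-marginal substitution.

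Finally I would combine the per-entry bounds. By a union bound over all $k^2$ entries, $\mathbf{P}[|M - \hat{M}| \geq \epsilon] = \mathbf{P}[\max_{i,j} |M_{i,j} - \hat{M}_{i,j}| \geq \epsilon] \leq k^2 \exp(-\epsilon^2 \theta n / 3)$ (absorbing the factor $2$ for brevity, or keeping it and adjusting constants). Setting this $\leq \delta$ and solving for $n$ gives $n > \frac{3}{\epsilon^2 \theta} \ln\!\big(k^2/\delta\big)$ in the naive form; the stated bound $n > \frac{3}{\epsilon^2 \theta} \ln\!\big(1 - (1-\delta)^{1/k^2}\big)$ corresponds instead to an independence-based argument, treating the $k^2$ events as independent so that $\mathbf{P}[\text{all entries within }\epsilon] \geq (1 - p)^{k^2}$ with per-entry failure $p$, then requiring $(1-p)^{k^2} \geq 1-\delta$, i.e. $p \leq 1 - (1-\delta)^{1/k^2}$, and inverting the Chernoff tail $p = \exp(-\epsilon^2 \theta n/3)$ to solve for $n$. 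I would therefore follow this second route: set the per-entry tail equal to $1 - (1-\delta)^{1/k^2}$ and invert.

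\textbf{Main obstacle.} The hardest part is making the dependence on the random sample count $N_i$ rigorous. The entries $\hat{M}_{i,j}$ are not independent across $j$ (they share the same conditioning set), and $N_i$ appears in the denominator, so a clean application of a fixed-sample-size Chernoff bound requires either conditioning carefully on $\{N_i\}$ or a uniform lower bound $N_i \geq \theta n$ that must be justified from the definition of $\theta$; reconciling the $\theta$-dependence and the exact algebraic form of the stated threshold (in particular the appearance of $\ln(1-(1-\delta)^{1/k^2})$ rather than $\ln(k^2/\delta)$) is where the bookkeeping will be most delicate.
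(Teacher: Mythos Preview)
Your proposal is essentially the paper's argument: a per-entry multiplicative Chernoff bound combined across all $k^{2}$ pairs via an independence/product step (not a union bound), which is precisely what yields the $\ln\!\big(1-(1-\delta)^{1/k^{2}}\big)$ form you correctly anticipated. The only divergence is in how $\theta$ enters: the paper simply treats $n\hat{M}_{i,j}$ as Binomial with mean $nM_{i,j}$ and then replaces $M_{i,j}$ by $\theta=\min_{i,j}M_{i,j}$ in the Chernoff exponent, sidestepping your (more careful) worry about the random denominator $N_{i}$ altogether.
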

\begin{proof}
    We first introduce the Chernoff bound, which states that $\mathbb{P}[X>(1+\epsilon)\mu] \leq \exp{(\frac{-\mu \epsilon^{2}}{3})}$, where $X$ is a random variable, and $\mu$ is $E[X]$. 
    In our situation, we apply this to the random variable $\hat{M}_{i,j}$ using $n$ samples. $n \hat{M}_{i,j}$ is a Binomial random variable with $\mu = n M_{i,j}$, and therefore
    \begin{equation} 
        \mathbb{P}\Big[n \hat{M_{i,j}}>(1+\epsilon)n M_{i,j}\Big] \leq \exp{\Big(\frac{- n M_{i,j} \epsilon^{2}}{3}\Big)}
    \end{equation} 

    Now, consider the probability that no concept pair differ by more than $\epsilon$; this has probability $(1-\exp{(-n M_{1,1} \frac{\epsilon^2}{3})})(1-\exp{(-n M_{1,1} \frac{\epsilon^2}{3})}) \cdots $. Using our $\theta$ bound simplifies this to demonstrate that 
    \begin{equation}
        1 - \Big(1 - \exp{\big(-n \theta \frac{\epsilon^2}{3}\big)}\Big)^{k^2} < \delta
    \end{equation}

    Simplifying this yields that this occurs whenever $n>\frac{3}{\epsilon^2 \theta} \mathrm{ln}(1-(1-\delta)^{\frac{1}{k^2}})$
\end{proof}

\begin{theorem*}
    Suppose that an expert intervenes on $r$ concepts, while the other $k-r$ concepts are predicted by a concept predictor $g$. 
    Suppose that our prediction for the co-occurrence matrix $M \in \mathbb{R}^{k \times k}$ is corrupted by Gaussian noise, $M' = M+\mathcal{N}(0,\epsilon)$. 
    For any concept $i$, let $\beta_{i} = \mathrm{argmax}_{ 1 \le j \le k} M_{i,j}$ and $\beta'_{i} = \mathrm{argmax}_{1 \le j \le k} M'_{i,j}$.
    Then $\sum_{i=k-r}^{k} M_{i,\beta_{i}} - M_{i,\beta'_{i}} \leq \sum_{i=k-r}^{k} \sum_{j=1}^{r} \Phi(\frac{M_{i,j}-M_{i,\beta{i}}}{\epsilon}) (M_{i,j}-M_{i,\beta_{i}})$, where $\Phi$ is the standard normal CDF.  
\end{theorem*}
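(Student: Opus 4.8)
The plan is to read the inequality as a bound on the \emph{expected} selection regret, the expectation being over the Gaussian corruption of $M$, since the left-hand side is random through $\beta'_i$ while the right-hand side is deterministic. The quantities $\beta_i$ and $\beta'_i$ are the best-matching intervened concepts (over $j \in \{1,\dots,r\}$) under the clean and noisy co-occurrences respectively, which is consistent with the right-hand sum ranging only over $j \le r$. First I would fix an unintervened concept $i \in \{k-r,\dots,k\}$ and isolate its contribution $M_{i,\beta_i}-M_{i,\beta'_i}$. Because $\beta_i$ selects the true maximiser of row $i$, this contribution is always nonnegative, and it is strictly positive only on the event that noise promotes some other intervened concept $j$ to be the apparent maximiser of row $i$ of $M'$. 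I would therefore write the expected per-concept regret as
\[
\mathbb{E}\big[M_{i,\beta_i}-M_{i,\beta'_i}\big] = \sum_{j\neq\beta_i} \mathbb{P}[\beta'_i = j]\,\big(M_{i,\beta_i}-M_{i,j}\big),
\]
in which every weight $M_{i,\beta_i}-M_{i,j}\ge 0$.

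The key probabilistic step is to control $\mathbb{P}[\beta'_i = j]$ via the inclusion $\{\beta'_i = j\} \subseteq \{M'_{i,j}\ge M'_{i,\beta_i}\}$: if $j$ is the noisy argmax it must in particular beat the true best index $\beta_i$ under $M'$. Writing $M'_{i,j}-M'_{i,\beta_i} = (M_{i,j}-M_{i,\beta_i}) + (Z_j - Z_{\beta_i})$ with independent Gaussian corruptions $Z_j,Z_{\beta_i}$, the difference $Z_j-Z_{\beta_i}$ is mean-zero Gaussian, so
\[
\mathbb{P}[\beta'_i = j] \le \mathbb{P}[M'_{i,j}\ge M'_{i,\beta_i}] = \Phi\!\left(\frac{M_{i,j}-M_{i,\beta_i}}{\sigma}\right),
\]
where $\sigma$ is the standard deviation of $Z_j-Z_{\beta_i}$ and is proportional to $\epsilon$. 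Substituting this termwise bound into the per-concept identity (legitimate because the weights are nonnegative) and summing over the intervened indices $j$ and the unintervened concepts $i$ yields
\[
\sum_{i=k-r}^{k}\big(M_{i,\beta_i}-M_{i,\beta'_i}\big) \le \sum_{i=k-r}^{k}\sum_{j=1}^{r}\Phi\!\left(\frac{M_{i,j}-M_{i,\beta_i}}{\sigma}\right)\big|M_{i,j}-M_{i,\beta_i}\big|,
\]
which is the claimed bound.

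I expect two points to demand the most care. The first is the bookkeeping of signs and constants: since $M_{i,j}-M_{i,\beta_i}\le 0$, the weight must be read as $M_{i,\beta_i}-M_{i,j}=|M_{i,j}-M_{i,\beta_i}|$, and the exact denominator depends on the convention for $\mathcal{N}(0,\epsilon)$ (variance versus standard deviation, together with the $\sqrt{2}$ arising from differencing two independent corruptions); matching $\sigma$ to the stated $\epsilon$ pins down this convention. The second, and the genuine obstacle, is that the events $\{\beta'_i=j\}_j$ are neither independent nor cleanly partitioned once I replace them by the pairwise comparison events, so I must justify that the per-term domination $\mathbb{P}[\beta'_i=j]\le\mathbb{P}[M'_{i,j}\ge M'_{i,\beta_i}]$ paired with nonnegative weights gives a valid upper bound rather than an exact decomposition --- essentially a union-bound-style estimate. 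It is the independence of the corruptions across distinct entries of $M'$ that reduces each pairwise probability to the single clean $\Phi$ term.
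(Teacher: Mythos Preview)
Your proposal is correct and follows essentially the same approach as the paper: decompose the expected regret per unintervened concept, bound $\mathbb{P}[\beta'_i=j]$ by the pairwise event $\{M'_{i,j}\ge M'_{i,\beta_i}\}$, evaluate via the Gaussian CDF, and sum. You are in fact more careful than the paper's three-line argument, correctly flagging both the sign issue in the weight (it should be $M_{i,\beta_i}-M_{i,j}$) and the $\sqrt{2}$ scaling from differencing two independent corruptions; the paper sidesteps the latter by tacitly comparing $M'_{i,j}$ to the \emph{unperturbed} $M_{i,\beta_i}$, which is how it lands on $\epsilon$ rather than $\sqrt{2}\,\epsilon$ in the denominator.
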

\begin{proof}
    Focus on the regret arising from predicting concept $i$. For this, mistakes arise when the ground truth concept $j$ is used instead of the ground truth concept $1$. That is, if $M^\prime = M + \mathcal{N}(0,\epsilon)$, then whenever $M^\prime_{i,j} > M^\prime_{i,1}$, we incur a incur a regret of $M_{i,1}-M_{i,j}$. We upper bound this as the probability that $M_{i,j} + \mathcal{N}(0,\epsilon) \geq M_{i,1} = \Phi(\frac{M_{i,j}-M_{i,1}}{\epsilon})$. We then repeat this summation across all concepts to get our bound. 
\end{proof}

\section{Basis-Aided Intervention Details}
\label{sec:intervention_ablation}
Throughout our experiments, we select $r=10$, though we found that larger r values have a similar impact. 
We evaluate the impact of $r$, and find that for $q \geq 10$, increasing the $r$ value has minimal impact.  
The reason for this is that some ground truth labels are necessary to improve basis-aided intervention; however, past a certain point, the impact of these values saturates and has minimal impact. 

We modify our concept intervention algorithm (Algorithm~\ref{alg:concepts}) slightly for situations where intervened concepts aren't well-correlated with concepts we aim to predict. 
For example, if we aim to predict concept $j$ using similar intervened concepts $I(j)$, then we compute two predictions: the first is through the concept predictor, $g(\mathbf{x}^{(i)})_{j}$ and the second is through the intervened concepts from Algorithm~\ref{alg:concepts}, $\hat{c}_{j}^{(i)}$. 
We then measure the similarity score, $s_{j,j'} = \delta_{v}(\mathbf{v}^{(j)},\mathbf{v}^{(j'}))$, where we set $\delta_{v}$ to be the cosine similarity between vectors due to its natural $-1-1$ range. 
We leverage this to combine the original prediction for concept $j$, $g(\mathbf{x}^{(i)})_{j}$ and the basis-aided prediction for concept $j$, $\hat{c}_{j}^{(i)}$, weighted by the average similarities, $w_{j} = \frac{1}{|I(j)|} \sum_{j' \in I(j)} s_{j,j'}$, so that the final concept prediction is 
\begin{equation}
    (1-w_{j}) \hat{c}_{j}^{(i)} + g(\mathbf{x}^{(i)})_{j}
\end{equation}
We do this to account for situations where concepts no or few intervened concepts are similar to concept $j$, forcing us to rely on the original concept prediction; however, in situations where concepts are sufficiently similar, we can simply use $\hat{c}_{j}^{(i)}$. 

We select the number of training epochs so that concept interventions still have an impact; this reflects real-world scenarios where models are imperfect and can still be assisted by human experts. 
For CUB, we train models for $100$ epochs, and for all models, we select learning rates through manual inspection. 
For MNIST we select a learning rate of $0.001$ while for all other datasets, we select a learning rate of $0.01$. 
At the same time, for MNIST and dSprites, we increase the difficulty of the task by only training models for $1$ epoch on dSprites and $25$ epochs on MNIST while using only 10\% of the dataset. 
In particular, for MNIST and dSprites, we find that training with the full dataset for $100$ epochs leads to perfect accuracy, rendering concept interventions meaningless. 
For CheXpert, we find the opposite situation, where concept intervention seems not to raise accuracy. 
This might potentially be due to computational limits; the CheXpert dataset is large, so we downsample the dataset to $4000$ training data points but are unable to train models where concept intervention helps. 
We leave further investigation of the CheXpert dataset to future work. 

For all datasets, we use the default parameters and choice of loss functions from~\citet{zarlenga2022concept}. 
We use a ResNet architecture for the concept predictor $g$~\cite{he2016deep} and a 2-layer MLP for the label predictor $f$ 
We only vary the learning rate, which we decide to be $0.001$ for the MNIST and dSprites datasets, while we let this be $0.01$ for the CUB and CheXpert datasets. 
We select these numbers through manual experimentation.

\section{Synthetic Analysis of Metrics}
\label{sec:synthetic}
To evaluate our proposed metrics (Section~\ref{sec:desiderata}), we develop a synthetic scenario and demonstrate the use of our metrics to distinguish between two different concept-based models. 
We develop a synthetic dataset so that we can control the inter-concept relationships.  
We consider a dataset with $\mathbf{x} \in [0,1]^{2}$, and 4 concepts. 
$\mathbf{x}$ is distributed so that, with probability $\frac{1}{3}$, both $\mathbf{x}_{1}$ and $\mathbf{x}_{2}$ are less than $\frac{1}{4}$, and with probability $\frac{1}{3}$, both $\mathbf{x}_{1}$ and $\mathbf{x}_{2}$ are greater than $\frac{3}{4}$. 
The remainder of the time, one of $\{\mathbf{x}_{1},\mathbf{x}_{2}\}$ is less than $\frac{1}{4}$ and the other is more than $\frac{3}{4}$. 
The first two concepts determine whether $ \mathbf{x}_{1} \leq \frac{1}{4}$ and $\mathbf{x}_{1} \geq \frac{3}{4}$, while the last two concepts determine whether $\mathbf{x}_{2} \leq \frac{1}{4}$ and $\mathbf{x}_{2} \geq \frac{3}{4}$. 
Concept-based models in this scenario aim to predict two things: $y_{1} = \mathrm{min}(\mathbf{x}_{1},\mathbf{x}_{2}) \leq \frac{1}{4}$ and $y_{2} = \mathrm{max}(\mathbf{x}_{1},\mathbf{x}_{2}) \geq \frac{3}{4}$. 
We note that the tasks require information from both concepts, and that the concepts are correlated, so that $\mathbf{x}_{1} \geq \frac{3}{4}$ increases the chance that $\mathbf{x}_{2} \geq \frac{3}{4}$. 

We leverage the metrics to distinguish between two concept-based models. 
The first is a linear predictor which leverages the features of $\mathbf{x}$ and a random linear combination of the concepts, $\hat{y} = \sum_{i=1}^{m} v_{i} \mathbf{x}_{i} + \sum_{i=1}^{m} u_{i} c_{i}$ where the $u_{i}$ are uniformly distributed between 0 and 1. 
The second is a linear predictor for each of the two tasks based on the presence of each concept, $\hat{y} = \sum_{i=1}^{k} w_{i} c_{i}$. 
We denote these two methods as ``random'' and ``correct.''
The representations for each model are the collection of weights, $u_{i}$ or $w_{i}$, so that each concept is represented with two weights (one for each task). 
The second concept-based model takes advantage of the available concepts, and therefore, better captures the inter-concept relationships present. 
To confirm this hypothesis, we compare the stability of the inter-concept relationships captured, along with the robustness and responsiveness to random noise.

\begin{figure*}
    \centering
    \subfloat[\centering Stability]{\includegraphics[width=0.35 \textwidth]{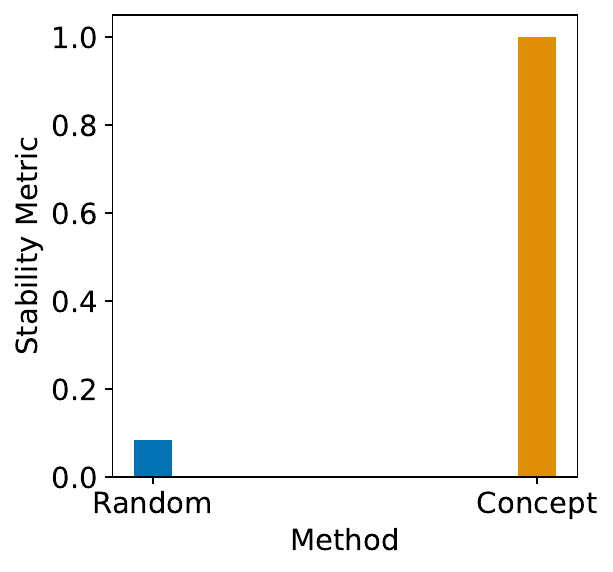}}
    \subfloat[\centering Robustness]{\includegraphics[width=0.35 \textwidth]{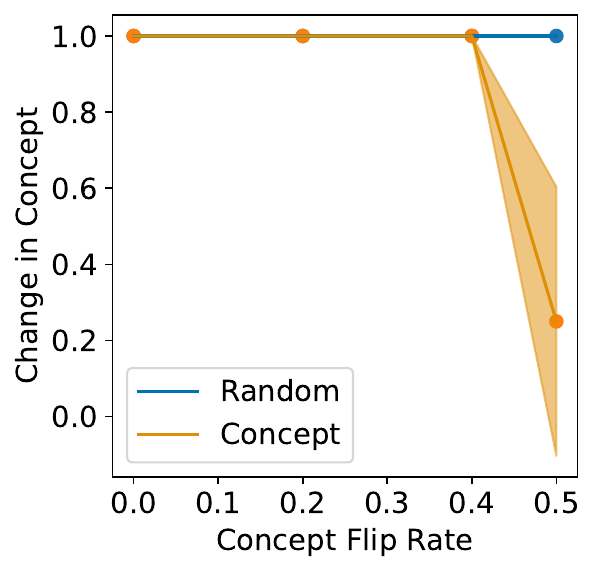}}

    \caption{On our synthetic dataset, the random concept-based model does worse on the stability metric compared to the concept-based model which leverages the available concept information, confirming the efficacy of the stability metric. Additionally, we see that the random concept-based model fails to change predictions even under the presence of heavy noise, showing the necessity for the responsiveness metric.}
    \label{fig:synthetic}
\end{figure*}

We find that the ``random'' concept-based method performs worse according to the stability metric when compared to the concept-based method which correctly leverages concept information (Figure~\ref{fig:synthetic}). 
This demonstrates the use of our stability metric as a way to evaluate concept-based models. 
Additionally, we find that, while both methods maintain the same relationships under a small amount of noise, we find that with increasing amounts of noise, only the ``correct'' concept-based method changes their inter-concept relationships. 
That is, while both methods have a similar robustness metric, the two differ in the responsiveness metric, as the ``random'' concept-based method fails to respond to significant dataset perturbations. 
These experiments demonstrate the ability of the stability, robustness, and responsiveness metrics to distinguish between concept-based models in a controlled dataset.  

\section{Comparison with Concept Leakage Metrics}
\label{sec:ois}
To better understand what our proposed metrics are measuring, we compare each of these metrics to the Oracle Impurity Score (OIS), a metric designed to measure the level of inter-concept leakage~\cite{zarlenga2023towards}. 
OIS is computed by measuring how well concept $i$ can be predicted from the representation for concept $j$, comparing this to a ground-truth oracle based on the true concept values. 
By comparing with the OIS metric, we can better understand whether models which capture inter-concept relationships also have low leakage. 
We measure the OIS scores across all datasets for the CEM and Label methods, selecting these two because we can compute concept representations on a per-data point level. 

\begin{figure}
    \centering
    \includegraphics[width=0.4\textwidth]{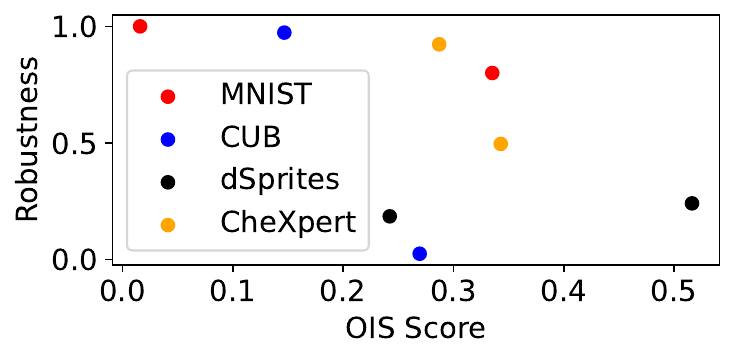}
    \caption{We compare OIS and robustness for the CEM and Label bases and find that across most datasets, more robust representations correspond to better (lower) oracle impurity scores (OIS). This shows that the robustness metric captures the quality of the representation itself.  }
    \label{fig:ois_scatter}
\end{figure}

We find that, across a majority of datasets, methods which exhibit a higher robustness metric additionally have a lower (better) OIS. 
We additionally find that the Label basis has both a better robustness metric and an OIS across a majority of datasets, which further demonstrates the use of Label bases as a baseline. 
The alignment with the OIS implies that our metrics capture fundamental properties of representations, and thereby can be used to better understand concept-based methods.

\section{Impact of Concept Correlations when Captured by Models}
\label{sec:mnist_correlation}
To further understand the impact of inter-concept relationships on downstream accuracy, we investigate how varying the level of concept correlation within a dataset impacts task accuracy. 
We demonstrate that models can leverage inter-concept relationships to improve performance on downstream tasks. 
Our analysis in Section~\ref{sec:experiments} demonstrated that all concept-based models exhibit stable and robust representations on the MNIST dataset. 
Using this information, we analyze whether such an understanding allows for higher task accuracy by leveraging inter-concept relationships. 
We vary the concept correlation between the number and colour concepts, so that the number and colour concepts agree in an $q$-fraction of the examples, randomizing over the colour in other examples. 
Because CEM models can capture inter-concept relationships, we believe that this should allow models to perform better when the strength of inter-concept correlations increases. 
To test this, we vary $q$ in $\{0\%,20\%,40\%,60\%,80\%,100\%\}$, and measure the label and concept accuracy, along with the concept intervention accuracy. 

\begin{figure*}
    \centering
    \subfloat[\centering Concept Acc.]{\includegraphics[width=0.3 \textwidth]{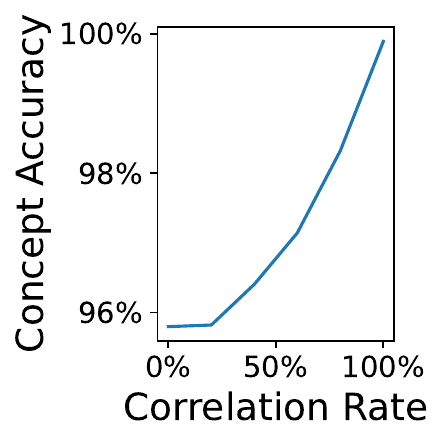}}
    \subfloat[\centering Label Acc.]{\includegraphics[width=0.3 \textwidth]{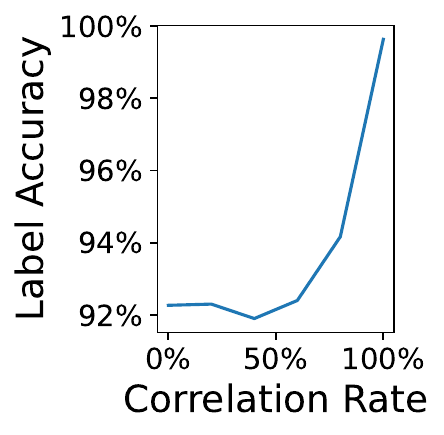}}
    \subfloat[\centering Intervention Acc.]{\includegraphics[width=0.3 \textwidth]{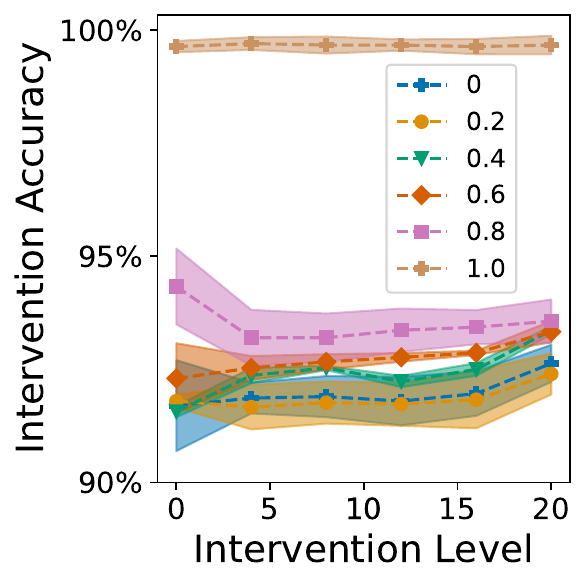}}

    \caption{Increasing correlations between concepts lead to higher accuracy for CEM models trained on the coloured MNIST dataset. We increase the correlation between the number and colour concepts from $0\%$ to $100\%$, with the correlation rate denoting the frequency at which number and colour concepts align. We similarly find that increasing the level of concept correlation increases concept intervention accuracy, across the level of intervention.}
    \label{fig:mnist_correlation}
\end{figure*}

We find that, as expected, increasing the amount of concept correlation between number and colour concepts leads to improved accuracy, for concept, task, and intervention accuracies (Figure~\ref{fig:mnist_correlation}). 
Such an effect is unsurprising, as models which understand the correlation between number and colour concepts can predict the number from the colour concept. 
This provides two sources of information from which MNIST models can predict concept and task labels, which leads to higher accuracy. 
Additionally, understanding inter-concept and concept-task relationships is necessary for improved concept intervention performance; if models are unresponsive to concept imputations, then concept interventions would have no impact on accuracy. 
We find that the efficacy of concept interventions increases with increases in inter-concept correlations, showing that models can pick up on stronger inter-concept correlations. 

\end{document}